\def\eqref#1{equation~\ref{#1}}
\def\1{\bm{1}}
\def\rvd{{\mathbf{d}}}
\def\rvx{{\mathbf{x}}}
\def\rvy{{\mathbf{y}}}
\def\rvz{{\mathbf{z}}}
\DeclareMathAlphabet{\mathsfit}{\encodingdefault}{\sfdefault}{m}{sl}
\SetMathAlphabet{\mathsfit}{bold}{\encodingdefault}{\sfdefault}{bx}{n}
\theoremstyle{plain}
\newtheorem{theorem}{Theorem}[section]
\newtheorem{lemma}[theorem]{Lemma}
\theoremstyle{definition}
\newtheorem{definition}[theorem]{Definition}
\theoremstyle{remark}
\newtheorem{remark}[theorem]{Remark}
\newcommand{\beqref}[1]{(\ref{#1})}
\icmltitlerunning{A Method of Supervised Learning from Conflicting Data with Hidden Contexts}
\begin{document}

\twocolumn[
\icmltitle{A Method of Supervised Learning from Conflicting Data with Hidden Contexts}



\icmlsetsymbol{equal}{*}

\begin{icmlauthorlist}
\icmlauthor{Tianren Zhang}{thu}
\icmlauthor{Yizhou Jiang}{thu}
\icmlauthor{Feng Chen}{thu}
\end{icmlauthorlist}

\icmlaffiliation{thu}{Department of Automation, Tsinghua University, Beijing, China}

\icmlcorrespondingauthor{Feng Chen}{chenfeng@mail.tsinghua.edu.cn}

\icmlkeywords{Machine Learning, ICML}

\vskip 0.3in
]



\printAffiliationsAndNotice{}  

\begin{abstract}
Conventional supervised learning assumes a stable input-output relationship. However, this assumption fails in open-ended training settings where the input-output relationship depends on hidden contexts. In this work, we formulate a more general supervised learning problem in which training data is drawn from multiple unobservable domains, each potentially exhibiting distinct input-output maps. This inherent conflict in data renders standard empirical risk minimization training ineffective. To address this challenge, we propose a method LEAF that introduces an allocation function, which learns to assign conflicting data to different predictive models. We establish a connection between LEAF and a variant of the Expectation-Maximization algorithm, allowing us to derive an analytical expression for the allocation function. Finally, we provide a theoretical analysis of LEAF and empirically validate its effectiveness on both synthetic and real-world tasks involving conflicting data.
\end{abstract}

\section{Introduction}
\label{sec:intro}


Conventional supervised learning (SL) assumes a stable relationship between inputs and outputs~\citep{vapnik_nature_1999}. However, real-world objects often possess multiple semantic properties, and which one is of practical interest depends on the context. In classical SL, context is implicitly embedded in data curation. Yet, in open-ended training settings, context may be inaccessible. A prominent example is training large language models (LLMs), which typically rely on internet-scale datasets curated from diverse sources~\citep{brown_language_2020,touvron_llama_2023,openai_gpt-4_2023}. The inaccessibility of context can rise from their inherent unobservability~\citep{harries_extracting_1998}, privacy and security constraints~\citep{mcmahan_communication-efficient_2017,mitchell_algorithmic_2021}, or human preferences where identity is undisclosed~\citep{kairouz_advances_2021,siththaranjandistributional}.


\begin{figure}[t]
\centering
\includegraphics[width=0.9\linewidth]{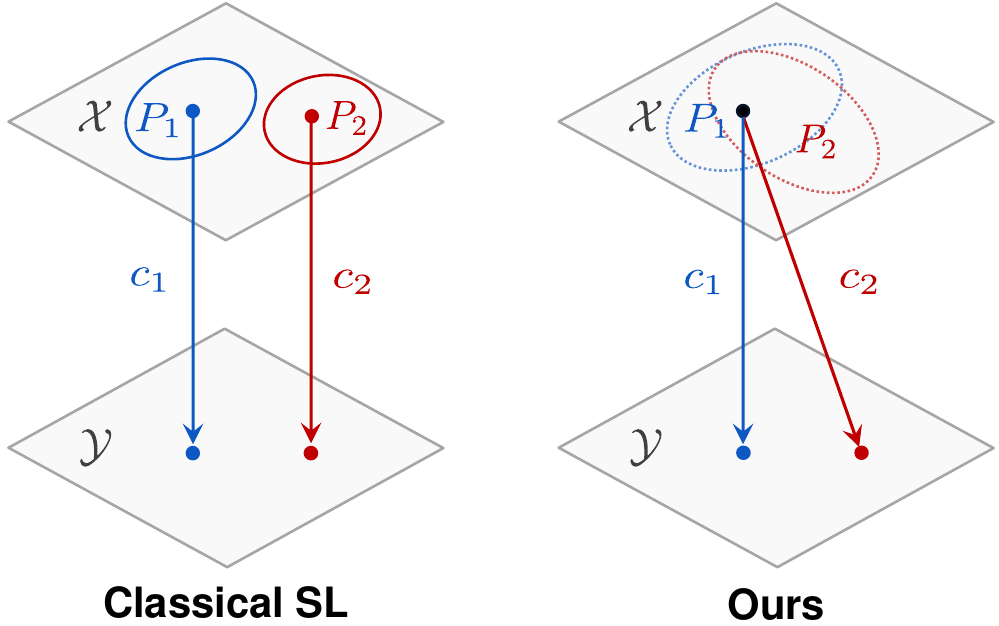}\vspace{-0.5em}
\caption{Comparison between classical supervised learning (SL) and our formulation. Classical SL considers observable domains (datasets) with stable input-output maps; we consider \emph{unobservable} domains with potentially \emph{conflicting} input-output maps.}
\label{fig:motivation}
\vspace{-1em}
\end{figure}

When the assumption of a stable input-output relationship does not hold, the standard supervised learning (SL) paradigm of empirical risk minimization (ERM) becomes problematic. Without this assumption, data exhibits inherent \emph{conflict}: the outputs associated with the same input can be distinct across different examples, leading to interference during training. For instance, annotators with different identities may label the same image differently, and standard ERM with cross-entropy loss may converge to an undesirable solution that assigns equal probability to both labels. Existing machine learning models often rely on human intervention to filter or re-label data to resolve such conflicts~\citep{krause_unreasonable_2016,vo_harnessing_2017,taori_measuring_2020}. However, these approaches typically require domain knowledge and may be ineffective when crucial side information is difficult to define or collect in practice~\citep{hanna_towards_2020}. This raises a fundamental question: can we enable the learner to \emph{automatically} resolve conflicts in training data without additional human intervention?

In this work, we formulate the problem of training on conflicting data by assuming that data originates from multiple unobservable domains, where each domain maintains a stable but potentially distinct input-output mapping. See Figure~\ref{fig:motivation} for a comparison between classical SL and our formulation. Instead of learning a single global model, our objective is to learn multiple \emph{local models} that capture mutually conflicting input-output relationships.

To this end, we propose LEAF (Learning with an Allocation Function), a learning framework that comprises a high-level \emph{allocation function} and a low-level model set. The allocation function assigns data to different models to eliminate conflicts, while the models learn local predictions.

Our key insight is that the conflict \emph{itself} can be leveraged to learn an effective allocation strategy. If the allocation function assigns conflicting examples to the same model, training error minimization will be hindered by the conflict, providing a natural feedback signal to improve allocations. By interpreting LEAF probabilistically, we establish a connection between LEAF and a variant of the Expectation-Maximization (EM) algorithm~\citep{dempster_maximum_1977-1} and derive an \emph{analytical} expression for the allocation function.

We provide theoretical justifications for LEAF through an analysis of domain identifiability, learnability, and generalization error. Our results generalize classical SL theory~\citep{valiant_theory_1984,vapnik_nature_1999}, revealing the conditions under which LEAF can provably identify conflicting concepts in data and providing a bound on generalization error.

To evaluate the effectiveness of LEAF, we conduct experiments on regression and classification tasks using both synthetic and real-world data. Empirical results demonstrate that LEAF effectively resolves conflicts in training data, significantly outperforms task-specific approaches across three practical label-space settings, and achieves competitive performance compared to fully supervised baselines with access to domain indices or label set oracles.

In summary, our contributions are three-fold:\vspace{-0.5em}
\begin{itemize}[leftmargin=*]
  \item We formulate a general SL problem of learning from conflicting data, which captures the key difficulty in training on the data with multiple hidden contexts.\vspace{-0.0em}
  \item 
  We propose LEAF, a theoretically grounded learning framework that automatically resolves data conflicts, and establish its connection to a variant of the EM algorithm.\vspace{-0.0em}
  \item We empirically demonstrate that LEAF effectively eliminates conflicts in data without human intervention, outperforms task-specific methods, and achieves results comparable to fully supervised baselines.
\end{itemize}

\subsection{Related work}

Early work on supervised learning in the presence of hidden contexts~\citep{schlimmer_incremental_1986,helmbold_tracking_1994,widmer_effective_1993} introduced the notion of context to model concept drift. Most of these studies focus on an \emph{online learning} setting, aiming to track and \emph{adapt} to changing contexts in a data stream~\citep{kubat_adapting_1995,tsymbal_problem_2004,gama_survey_2014}. In contrast, our work seeks to identify hidden contexts within training data and learn independent models for each conflicting context in a general SL framework. \citet{harries_extracting_1998} propose extracting hidden contexts from offline data using contextual clustering, but their method relies on additional contextual clues and does not scale to modern machine learning datasets.

To our knowledge, the most closely related work is \citet{su_task_2020}, who study a variant of the multi-task learning problem where the learner must infer tasks from online examples without task indices. This setting can also be framed using hidden contexts. However, there are two key differences between our work: (\romannumeral 1) They focus on an \emph{online learning} setting, whereas we consider batch learning. As we show in Sec.~\ref{sec:theo}, this difference leads to fundamentally distinct results in terms of domain \emph{identifiability} and other theoretical results. (\romannumeral 2) Their approach parameterizes a separate network to infer task indices, whereas our allocation function is derived \emph{analytically} through its connection to the EM algorithm, eliminating the need for additional parameters. As demonstrated in Sec.~\ref{sec:exp}, this design choice significantly improves both stability and sample efficiency.

In the context of preference learning, several works explicitly model annotator disagreement~\citep{baumler2023examples,fleisig2023majority} and address learning under annotator inconsistencies~\citep{fish2023generative,jia2024embedding}. However, these approaches typically train a single model. \citet{siththaranjandistributional} propose a distributed reward model for fine-tuning LLMs using RLHF. In contrast, we focus on supervised learning and aim to train \emph{multiple} local models simultaneously to disentangle conflicting contexts.

For further discussion of related work, see Appendix~\ref{appsec:related_work}.

\section{Problem formulation and LEAF framework}
\label{sec:subjective}

In this section, we formulate our learning problem and introduce the overall framework of LEAF.

\textbf{Notation.} We follow conventional SL terminology: let $\mathcal{X}\subseteq\mathbb{R}^L$ be the input space, $\mathcal{Y}$ the output space, and $\mathcal{H}$ the hypothesis space, where each hypothesis (model) is a function from $\mathcal{X}$ to $\mathcal{Y}$. Let $\ell:\mathcal{Y}\times\mathcal{Y}\rightarrow [0,1]$ be a bounded loss function. For any $\mathcal{S}\subseteq\mathcal{X}$, we denote by $\lambda(\mathcal{S})$ the Lebesgue measure of $\mathcal{S}$. We use $[k] = \{1,2,\cdots,k\}$ for positive integers $k$ and use $\mathbbm{1}$ as the indicator function. Given a probability density $P_X$ over $\mathcal{X}$, we denote by $\mathrm{supp}(P_X)\vcentcolon=\{x\in\mathcal{X} \mid P_X(x)>0\}$ its support. For a set $S$, we use $|S|$ to denote its cardinality.
We use superscripts to indicate sampling indices (e.g., $\rvd^i$ and $\rvx^{ij}$) and subscripts for element indices in an ordered set (e.g., $d_i$).

\vspace*{-0.22em}
\subsection{Problem formulation}
\label{subsec:pro}
\vspace*{-0.22em}

We begin by formally defining the notion of domain conflict.
Following seminal works in domain adaptation~\citep{ben-david_theory_2010,mansour_domain_2009,mansour_domain_2008}, we define a \emph{domain} $d$ as a pair $\langle P_X, c\rangle$ consisting of an input distribution $P_X$\footnote{For brevity, we drop the subscript $X$ in $P_X$ when it is clear from context.} over $\mathcal{X}$ and a target function $c:\mathcal{X}\rightarrow \mathcal{Y}$. We assume that the training data is sampled from a \textit{domain set} $D =\{d_i\}_{i=1}^N = \{\langle P_i, c_i\rangle\}_{i=1}^N$ containing $N$ (agnostic to the learner) domains. The \textbf{goal} of the learner is to learn \emph{all} target functions $\{c_i\}_{i=1}^N$. We formulate the \emph{conflicts} between domains as follows.
\begin{definition}[Conflict domains]
\label{def:conflict}
For two domains $\langle P,c \rangle$ and $\langle P',c' \rangle$, let $\mathcal{X}_\mathrm{int} = \mathrm{supp}(P)\cap\mathrm{supp}(P')$ be the intersection of the supports of their input distributions. We say that the two domains are \emph{conflicting} if there exists $\mathcal{S}\subseteq\mathcal{X}_\mathrm{int}$ such that $\lambda(\mathcal{S}) > 0$ and $c(x) \ne c'(x)$ for every $x\in\mathcal{S}$.
\end{definition}
Since conflicting domains involve different target functions, one naturally needs multiple models to capture those different input-output maps in the training data. We thus equip the learner with a \textit{model set} $H=\{h_i\}_{i=1}^K$ parameterized by $\Theta=\{\theta_i\}_{i=1}^K$ with cardinality $K>1$. This contrasts conventional SL, where typically only a global model is learned. Intuitively, $K$ should be sufficiently large to fully resolve the conflict in the training data, and this should depend not only on the number of domains, but also how ``severe'' the conflict is among all domains. We formalize this notion by introducing the \textit{conflict rank} of the domain set.
\begin{definition}[Conflict rank]
\label{def:rank}
For a domain set $D = \{\langle P_i,c_i \rangle\}_{i=1}^N$, its \emph{conflict rank} is defined as
\begin{equation}
\begin{aligned}
R(D) \vcentcolon= \max_{\mathcal{S}\subseteq \mathcal{X},\lambda(\mathcal{S})>0} \min_{x\in \mathcal{S}} \Big| &\big\{ c (x) \mid \langle P,c\rangle \in D, \\
&x\in \mathrm{supp}(P) \big\}\Big|.
\label{eq:rank}
\vspace*{-0.3em}
\end{aligned}
\end{equation}
\end{definition}
Note that the cardinality of a set only counts the number of its distinct elements. Intuitively, the conflict rank of a domain set is the maximum number of different outputs conditioned on the same input across all domains, which holds for all inputs in a subset $\mathcal{S}\subseteq\mathcal{X}$ with non-zero Lebesgue measure. We will theoretically show in Sec.~\ref{sec:theo} that $R(D)$ plays an important role in the learnability of LEAF.


The training data is drawn in a bilevel fashion. We consider a domain distribution $Q$ over the domain set $D$. Each training data batch is drawn from a domain in $D$, with the domain itself drawn from $Q$. Specifically, given the number of batches $m$ and batch size $n$, 
$m$ domain examples $\rvd^1, \ldots, \rvd^m$ are first i.i.d. drawn from $Q$ (a domain can recur multiple times);
then, for each $i\in[m]$, $n$ inputs $\rvx^{i1},\ldots,\rvx^{in}$ are i.i.d. drawn from domain $\rvd^i = \langle P^i,c^i \rangle$ according to the input distribution $P^i$ and the labeling function $c^i$: $\rvy^{ij} = c^i(\rvx^{ij}),\forall j\in [n]$.
Note that the identities of domains are \emph{unobservable}, thus the learner does not know from which domain each batch is drawn.

An implicit assumption made here is that the examples in the same batch belong to the same domain, which is reasonable since we usually do not expect the hidden contexts to change capriciously in practice~\citep{helmbold_tracking_1994,harries_extracting_1998}. Moreover, we show that this assumption is not only reasonable but also \emph{necessary} for the identifiability of domains (Appendix~\ref{appsec:necessity_batch}), and we also theoretically (Appendix~\ref{appsubsec:theo_gen}) and empirically (Appendix~\ref{appsec:robustness}) validate the robustness of our method under the scenario where each batch may contain cross-domain noise.

Finally, we note that while our sampling process is akin to meta-learning, our main goal differs in that we aim to learn multiple \emph{local} models that independently capture conflicting target functions, while meta-learning learns a \emph{global} meta-model that enables cross-domain adaptation between related domains~\citep{thrun_learning_1998,pentina_pac-bayesian_2014,amit_meta-learning_2018}.

\subsection{Overall framework}
\label{subsec:err}

In this section, we formulate the training objective of LEAF.
Recall that our goal is to learn all target functions from potentially conflicting domains. Then, a straightforward start point is the empirical multi-task error considered in multi-task learning. Here, we reformulate this error under our setting by the introduction of a \emph{data allocation oracle}:
\begin{equation}
\widehat{er}_{\mathrm{MTL}}(H) = \frac{1}{m} \sum_{i=1}^m \frac{1}{n} \sum_{j=1}^n \ell\left[g_{\mathrm{oracle}}(i)\left(\rvx^{ij}\right), \rvy^{ij}\right],
\label{eq:mtl_loss}
\end{equation}
where $g_\mathrm{oracle}: [m]\rightarrow H$ determines to which model in $H$ each batch is assigned. However, the main challenge is that $g_\mathrm{oracle}$ is unavailable in our setting; we thus replace it with a data-driven \emph{allocation function} $g$ that searches over the space of allocations to resolve the conflict. 
Given this, we introduce the empirical and expected errors of LEAF.
\begin{definition}[LEAF objective]
\label{def:errors}
For every $i\in[m]$, let $Z^i=\{(\rvx^{ij},\rvy^{ij})\}_{j=1}^n$ be the examples in the $i$-th batch. Then, the \emph{empirical error} of LEAF is given by
\begin{equation}
\widehat{er}(H) \vcentcolon= \frac{1}{m} \sum_{i=1}^m \frac{1}{n} \sum_{j=1}^n \ell\left[ \hat{g}\left(H,Z^i \right)\left(\rvx^{ij}\right),\rvy^{ij} \right],
\label{eq:obj_emp}
\end{equation}
where $\hat{g}:\mathcal{H}^K\times \mathcal{X}^n\times\mathcal{Y}^n\rightarrow H$ is the \emph{empirical allocation function}. The \emph{expected error} is
\begin{equation}
er(H) \vcentcolon= \mathbb{E}_{\rvd=\langle P,c\rangle\sim Q}\mathbb{E}_{\rvx\sim P}\ell\left[g(H,\rvd)(\rvx),c(\rvx)\right],
\label{eq:obj_ex}
\end{equation}
where $g:\mathcal{H}^K\times{D}\to H$ is the \emph{expected allocation function}.
\end{definition}
The definitions of emrpical and expected errors follow conventional SL settings~\citep{vapnik_nature_1999}: while we only have finite examples to identify the domains and perform data allocation in practice, we also want to study to what extent this strategy differs from the \emph{best} we can do theoretically---by giving the (expected) allocation function access to the true data distribution of each domain. This is captured by our generalization error analysis in Sec.~\ref{sec:theo}.
\subsection{Analytic form of allocation function}
\label{subsec:subj}
\vspace*{-0.1em}
Given the objective~\beqref{eq:obj_emp}, a naive implementation of LEAF is to parameterize the allocation function $\hat{g}$ with extra parameters independent of model parameters $\Theta$ and train the whole model end-to-end. However, this requires additional computational overhead and, more importantly, does not sufficiently consider the interaction between the high-level allocation function and low-level prediction models.

To address this problem, in the following we introduce a probablistic interpretation of LEAF from the angle of likelihood maximization and derive an \emph{analytic} form of the allocation function by drawing a connection between LEAF and a variant of the EM algorithm~\citep{dempster_maximum_1977-1} on conflicting data.

Specifically, let $p_\Theta(Y\,|\,X)$ be the predictive conditional distribution of the model set $H$ parameterized by $\Theta$, where we use $X$ and $Y$ as the shorthand for $(\rvx^{11},\cdots,\rvx^{mn})$ and $(\rvy^{11},\cdots,\rvy^{mn})$, respectively.
We consider maximizing a lower bound of the empirical conditional log-likelihood\vspace{-0.1em}
\begin{align}
\log p_\Theta(Y\,|\,X) &\ge \sum_{i=1}^m \sum_{k=1}^K q(h_k)  \sum_{j=1}^n \log \frac{p_\Theta(\rvy^{ij},h_k\,|\,\rvx^{ij})}{q(h_k)} \notag\\
&\vcentcolon= F(q,\Theta),\vspace{-0.1em}
\end{align}
where $q$ denotes a posterior distribution over $H$ given the data batch, which represents a similar role as the allocation function.
By an EM-style derivation (see details in Appendix~\ref{appsec:em}), we break the optimization on the $i$-th batch into two alternating steps: (\romannumeral 1) in the E-step, we maximize $F(q,\Theta)$ by updating model posterior $q$; (\romannumeral 2) in the M-step, we maximize $F(q,\Theta)$ by updating model parameters $\Theta$ through maximizing the weighted sum of likelihoods $\sum_{k=1}^K q(h_k) \sum_{j=1}^n \log p_\Theta(\rvy^{ij}\,|\,\rvx^{ij},h_k)$ given $q$.
This suggests a connection between EM and LEAF: the E-step corresponds to the functionality of the allocation function that selects a model in $H$ for a given data batch, while the M-step corresponds to updating the selected model by minimizing the prediction error.

In the common practice of EM, the output of the E-step is a \emph{multimodal} distribution over components. Nevertheless, here we expect a \emph{unimodal} output. This is because the aim of the allocation function is to resolve the conflict via data allocation, thus conflicting examples should be deterministically assigned to different models. We enforce a unimodal E-step by assuming the posterior $q$ as a Kronecker-Delta distribution over $H$. This variant has also been explored in the literature, referred to as Viterbi EM~\citep{brown_mathematics_1993,spitkovsky_viterbi_2010} or hard EM~\citep{samdani_unified_2012}. In Sec.~\ref{sec:exp}, we empirically observe that this choice brings consistent performance gains over standard EM, in terms of both final performance and robustness. Under the unimodality constraint, the E-step yields\vspace{-0.1em}
\begin{equation}
q\left(h\right) = \delta\Big(h=\arg\max_{h'\in H}\sum_{j=1}^n\log p_\Theta\left(\rvy^{ij}\,|\,\rvx^{ij},h'\right)\Big)\vspace{-0.1em}
\label{eq:hard_q}
\end{equation}
under a uniform model prior. This motivates a principled choice of the allocation functions:\vspace{-0.1em}
\begin{align}
&\hat{g}\left(H,Z^i\right) = \arg\min_{h\in H}\sum_{j=1}^n \ell\left[h(\rvx^{ij}),\rvy^{ij}\right],\forall i\in[m],\label{eq:g_emp}\\
&g(H,d_i) = \arg\min_{h\in H}\mathbb{E}_{\rvx\sim P_i}\ell\left[h(\rvx),c_i(\rvx)\right],\forall i\in[N].\notag 
\end{align}
In other words, a simple design that selects a model from $H$ that yields the \emph{smallest} error suffices. Compared to the naive implementation using a separately parameterized $\hat{g}$, this design choice involves no extra parameters and is free from explicit bilevel optimization (e.g., in~\citet{su_task_2020}).

We also note that with no assumptions on the forms of loss function and conditional distribution family, the connection between Eqs.~\beqref{eq:hard_q} and~\beqref{eq:g_emp} is not strict in general. Fortunately, we show in Appendix~\ref{appsec:dis} that \emph{exact} equivalence can be obtained in both common regression and classification settings under some natural assumptions.
\textbf{Overall algorithm.} For each data batch, LEAF consists of two phases: (\romannumeral 1) for each model in $H$, evaluate its average error on the batch; (\romannumeral 2) update the model that yields the smallest error using the batch of data.
In a broad sense, LEAF can be viewed as a latent variable modeling method by treating the domain identity of each data pair as a discrete latent variable (see Appendix~\ref{appsec:related_work} for more discussion).
We provide the pseudocode of LEAF in Appendix~\ref{appsec:subjective}.

\textbf{Convergence guarantee.}
Since LEAF is derived from an EM framework, existing convergence guarantee of (generalized) EM also applies here. Notably, it is known that under fairly general conditions on the monotonicity of the likelihood with respect to posterior $q$ in each iteration, the likelihood values in (generalized) EM converge to stationary points (see e.g., Chap. 3 of~\citet{mclachlan_em_2007}). Similarly, LEAF shares a guarantee of converging to a stationary point of the empirical error~\beqref{eq:obj_emp} under similar conditions. See Appendix~\ref{appsec:convergence} for more discussion.


\textbf{Usage of local models.} We note two possible usages of the local models learned by LEAF. (\romannumeral 1) We can use them simultaneously to produce multiple labels of interest for the same input, similar to multi-task learning or multi-label learning~\citep{zhang_review_2014}. (\romannumeral 2) When the test data come from an unknown domain, we can first use some labeled data points to identify the domain and then select the corresponding local model to process the rest of unlabeled data. We note that the second usage is similar to \emph{in-context learning} of sequence models such as LLMs~\citep{brown_language_2020}, where the input sequence can contain labeled examples as implicit contexts that steer the model to produce desirable outputs. In comparison, the process of identifying domains in LEAF is explicit via the allocation function.

\section{Theoretical analysis}
\label{sec:theo}

This section provides theoretical justifications for LEAF through an analysis of domain identifiability, learnability, and generalization error.
We aim to answer the following questions: (\romannumeral 1) Is minimizing the expected error~\beqref{eq:obj_ex} sufficient for identifying conflicting domains? (\romannumeral 2) What is the minimal number of models $K$ required to achieve a low error? (\romannumeral 3) Is minimizing the empirical error~\beqref{eq:obj_emp} sufficient for minimizing the expected error, given the empirical-expected discrepancy induced by \emph{both} the allocation function and error minimization?
All proofs are deferred to Appendix~\ref{appsec:proof}.


\textbf{Identifiability.} We first analyze whether minimizing the expected error~\beqref{eq:obj_ex} with the choice of the allocation function~\beqref{eq:g_emp} is sufficient for identifying the conflicting domains and recovering all target functions. We give an affirmative answer to this question by characterizing the form of the optimal solution of~\beqref{eq:obj_ex}.

\begin{theorem}[Identifiability of conflicting domains]
\label{theo:uniq}
Assume that all target functions are realizable. Then, the following two propositions are equivalent:
\begin{enumerate}[label=(\roman*)]
\vspace*{-0.5em}
\item For the domain distribution $Q$ satisfying that $Q(d) > 0$ for every $d\in D$, we have $er(H) = 0$.
\vspace*{-0.35em}
\item For each domain $d = \langle P,c \rangle$ in $D$, there exists $h\in H$ such that $\mathbb{E}_{\rvx\sim P}\ell\left[h(\rvx),c(\rvx)\right] = 0$.
\end{enumerate}
\end{theorem}


\begin{remark}
This result suggests that minimizing the expected error naturally elicits a global optimal solution where every target function is learned accurately on the support of its corresponding input distribution, and errs only on sets with zero Lebesgue measure. By constrast, we show that in an \emph{online learning} setting, it is provably difficult to identify the conflicting domains in Appendix~\ref{appsec:necessity_batch}.
\end{remark}

\textbf{Learnability.} We then focus on the condition under which a zero expected error can be achieved. This naturally relates to the learnability of LEAF under the framework of Probably Approximately Correct (PAC) learning~\citep{valiant_theory_1984} (see detailed definitions in Appendix~\ref{appsubsec:theo_pac}). While learnability in the conventional PAC analysis mainly depends on the complexity of the hypothesis space (cf. Chap. 6.4 of~\citet{shalev-shwartz_understanding_2014}), conflicting data incurs a new source of complexity by its \emph{conflict rank}, which we expect can be compensated by the cardinality of the model set. 
We prove a necessary condition of the PAC learnability of LEAF.
Since our result also applys to single-model ERM ($K=1$), it also illustrates the infeasibility of only training a global model from a theoretical perspective.
\begin{theorem}[PAC learnability]
\label{theo:pac}
For a domain set $D$ with conflict rank $R(D)$ and a model set with cardinality $K$, we must have $K\ge R(D)$ for LEAF to be PAC-learnable.
\end{theorem}
\begin{remark}
Note that the requirement $K\ge R(D)$ is not directly related to the number of domains; instead, it relies on the innate structural property of data characterized by the conflict rank of the domain set.
\end{remark}
\textbf{Generalization error.} The last question we are concerned about is whether the discrepancy between our empirical and expected objectives~\beqref{eq:obj_emp}~\beqref{eq:obj_ex}, i.e., the generalization error, can be controlled with guarantees. This question is crucial since we only have access to the empirical error estimated by the empirical examples sampled from the underlying distributions in practice, while our ultimate goal is to minimize the expected error. While conventional SL enjoys statistical guarantees on the upper bound of the generalization error induced by the single-model ERM process~\citep{vapnik_nature_1999}, LEAF introduces another source of generalization error by the discrepancy between the empirical and expected \emph{allocation functions}. Therefore, upper-bounding the generalization error of LEAF is non-trivial.

Our next theorem shows an upper bound of the generalization error when the data in each batch is noiseless, i.e., sampled from the same domain, and prove a more general result in Appendix~\ref{appsubsec:theo_gen} when each batch is itself a mixture of the data from the sampled domain and other domains.

\begin{theorem}[Generalization error]
\label{theo:gen}
For any $\delta\in(0,1]$,\vspace{0.2em} the following inequality holds uniformly for all model sets $H\in \mathcal{H}^K$ with probability at least $1-\delta$:
\begin{subequations}
\begin{align}
&er(H)\le \widehat{er}(H) \notag\\
&+ \underbrace{\sqrt{\frac{\mathscr{D}(\bar{\mathcal{S}})\left(\ln \nicefrac{2m}{\mathscr{D}(\mathcal{\bar{S}})}+1\right) - \ln\nicefrac{\delta}{24}}{m}} + \frac{1}{m}}_{\text{domain estimation error}} \label{eq:t1}\\
&+ \underbrace{\sum_{k=1}^N\left(\frac{m_k}{m}\sqrt{\frac{\mathscr{D}(\mathcal{S})\left(\ln\nicefrac{2m_k n}{\mathscr{D}(\mathcal{S})}+1\right) - \ln\nicefrac{\delta}{12N}}{m_k n}} + \frac{1}{mn}\right)}_{\text{instance estimation error}} \label{eq:t2}\\
&+ \underbrace{2\sqrt{\frac{\mathscr{D}(\mathcal{S})\left(\ln\nicefrac{2n}{\mathscr{D}(\mathcal{S})} + 1\right) - \ln\nicefrac{\delta}{24m} }{n}} + \frac{2}{n}}_{\text{allocation estimation error}},\label{eq:t3}
\end{align}
\label{eq:bound}%
\end{subequations}
where $\mathcal{\bar{S}} = \{\langle P,c\rangle\mapsto \min_{h\in H} \mathbb{E}_{\rvx\sim P}\ell\left[h(\rvx;\Theta),c(\rvx)\right]\}$ is the function set of the expected domain-wise error, $\mathcal{S} =\{(x,y)\mapsto \ell\left[h(x;\Theta),y\right]\}$ is the function set of the instance-wise error, $m_k = \sum_{i=1}^m \mathbbm{1}\left(\rvd^i=d_k\right), k\in [N]$ is the count of the $k$-th domain in $D$ during the sampling process, and $\mathscr{D}(\cdot)$ is the VC-dimension for some set of real functions.
\end{theorem}

As shown above, the expected error $er(H)$ is bounded by the empirical error $\widehat{er}(H)$ plus three terms. \emph{Domain estimation error}~\beqref{eq:t1} bounds the discrepancy between the empirical and expected domain distributions, which converges to zero when the number of batches $m\rightarrow\infty$; \emph{instance estimation error}~\beqref{eq:t2} bounds the discrepancy between the empirical and expected data distributions, which converges to zero when $m\rightarrow\infty$ \emph{or} the batch size $n\rightarrow\infty$, capturing the effect of domain recurrence; \emph{allocation estimation error}~\beqref{eq:t3} bounds the discrepancy between the empirical and the expected allocation functions, which converge to zero when $n\rightarrow\infty$. Empirically, we find a relatively small $n$ often suffices for controlling the generalization error.


\textbf{Comparison with SL and meta-learning bounds.}
We compare our bound~\beqref{eq:bound} with existing bounds in classical SL and meta-learning. In general, SL bounds~\citep{vapnik_nature_1999,mcallester_pac-bayesian_1999} contain an instance-level error term akin to~\beqref{eq:t2}, while meta-learning bounds~\citep{pentina_pac-bayesian_2014,amit_meta-learning_2018} further contain a domain-level error term akin to~\beqref{eq:t1}.
However, classical SL only considers a single domain or multiple domains with known example-domain correspondances; meta-learning treats each batch as a \textit{new} task and learns a \emph{global} meta-model for adaptation. Thus, none of these bounds contain an explicit model inference error term as~\beqref{eq:t3}.

\begin{remark}
\label{rem:comp}
While our bound uses the conventional VC-dimension as the complexity measure, extensions to other data-dependent complexity measures such as the Rademacher complexity~\citep{koltchinskii_rademacher_2000} are straightforward. Our analysis also extends to \emph{any} single-task SL generalization error bounds, e.g., PAC-Bayesian compression bounds~\cite{zhou_non-vacuous_2019,lotfi_pac-bayes_2022} that are non-vacuous for neural networks. See Appendix~\ref{appsubsec:theory_extension} for more discussion.
\end{remark}

\section{Experimental evaluation}
\label{sec:exp}

\begin{figure}[t]
  \centering
  \subcaptionbox{Ground truth\label{subfig:gt}}{\includegraphics[width=0.44\linewidth]{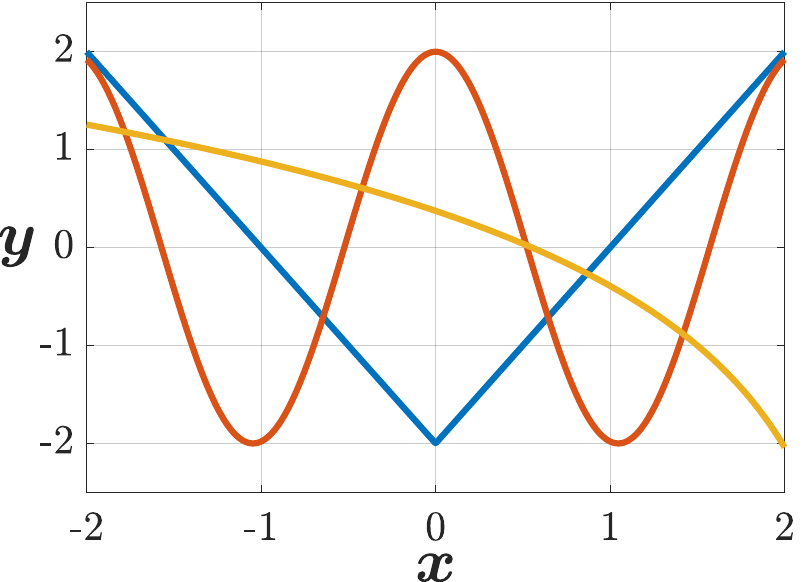}}
  \hspace{0.1em}
  \subcaptionbox{ERM result\label{subfig:vanilla}}{\includegraphics[width=0.44\linewidth]{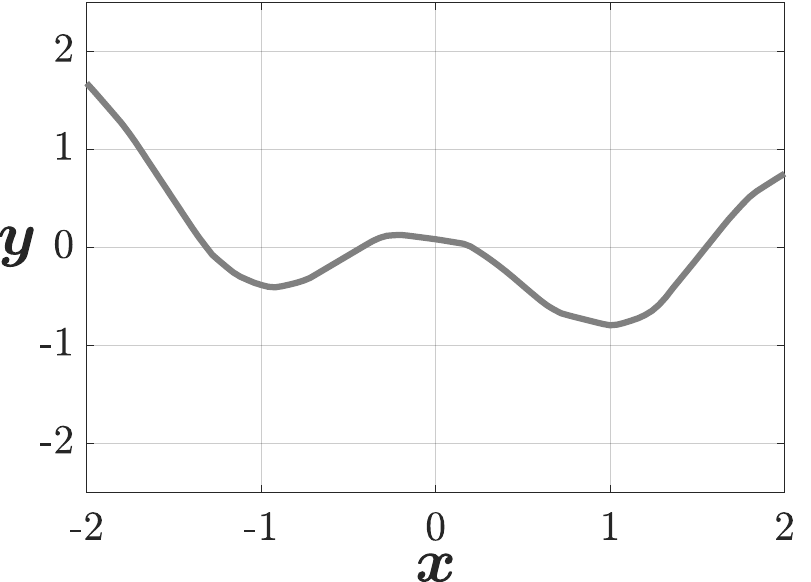}}
  \caption{Our considered regression task where ERM fails.}
  \label{fig:func_gt}
\end{figure}


In this section, we evaluate LEAF on regression and classification tasks that involve conflicting data. Our experiments are designed to (\romannumeral 1) assess the effectiveness of LEAF in eliminating the conflict in the training data, (\romannumeral 2) showcase the applicability of LEAF in three practical classification scenarios that involve multiple paralleled, hierarchical, or opposite input-output relationship in different domains, and (\romannumeral 3) compare LEAF with task-specific approaches.

\vspace*{-0.2em}
\subsection{Regression}

We start with an illustrative regression problem in which the examples are simultaneously sampled from three heterogeneous functions with additive Gaussian noise. The ground truth of these functions is shown in Figure~\ref{subfig:gt}. In this task, a vanilla ERM learner will trivially output the mean of all functions as shown in Figure~\ref{subfig:vanilla}. Each prediction model is instantiated by a neural network regressor with ReLU nonlinearities trained by Adam~\citep{kingma_adam:_2015}. More details are in Appendix~\ref{appsubsec:reg}.

\begin{figure*}[tbp]
  \centering
  \subcaptionbox{MAML\label{subfig:maml}}{\includegraphics[width=0.18\linewidth]{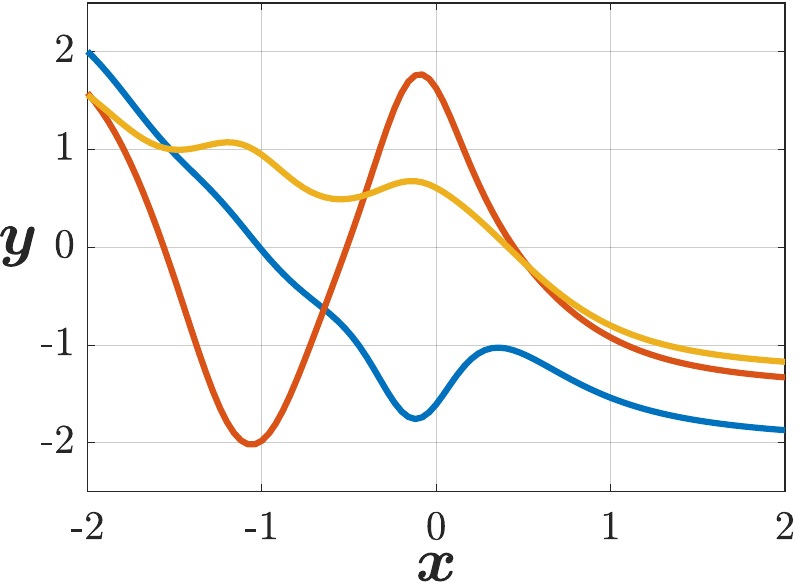}}
  \hspace{0.5em}
  \subcaptionbox{Modular\label{subfig:moma}}{\includegraphics[width=0.18\linewidth]{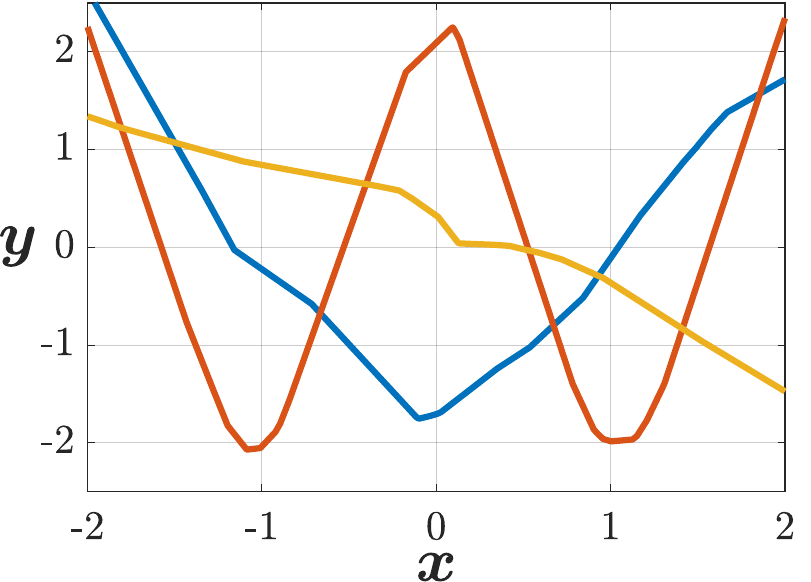}}
  \subcaptionbox{Sparse MoE\label{subfig:moe}}{\includegraphics[width=0.18\linewidth]{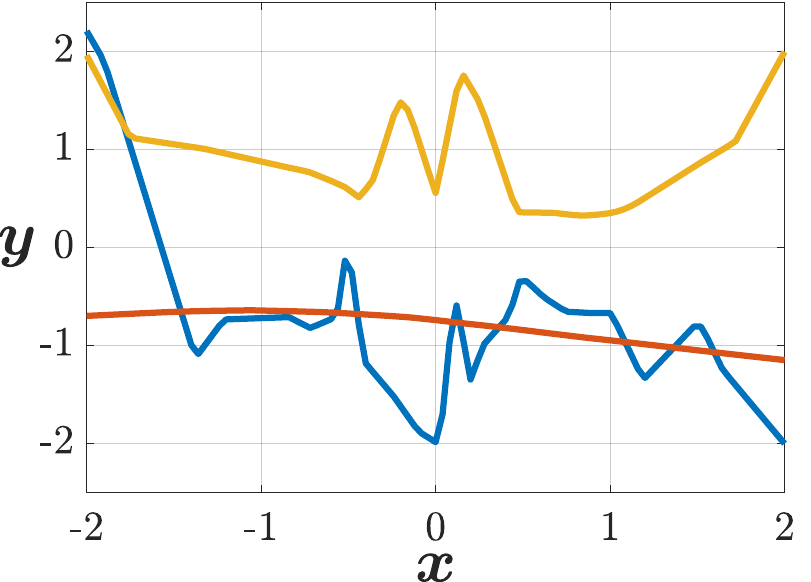}}
  \hspace{0.5em}
  \subcaptionbox{LEAF-EM\label{subfig:em}}{\includegraphics[width=0.18\linewidth]{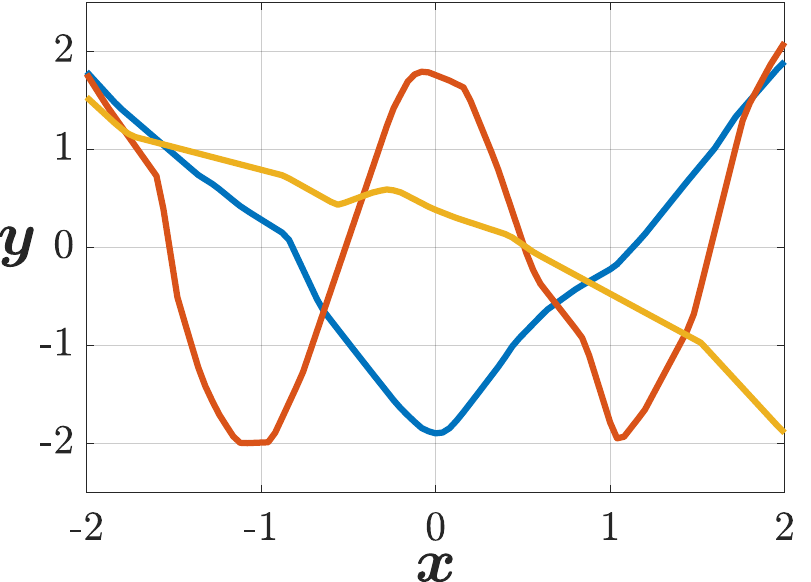}}
  \hspace{0.5em}
  \subcaptionbox{LEAF\label{subfig:subjective}}{\includegraphics[width=0.18\linewidth]{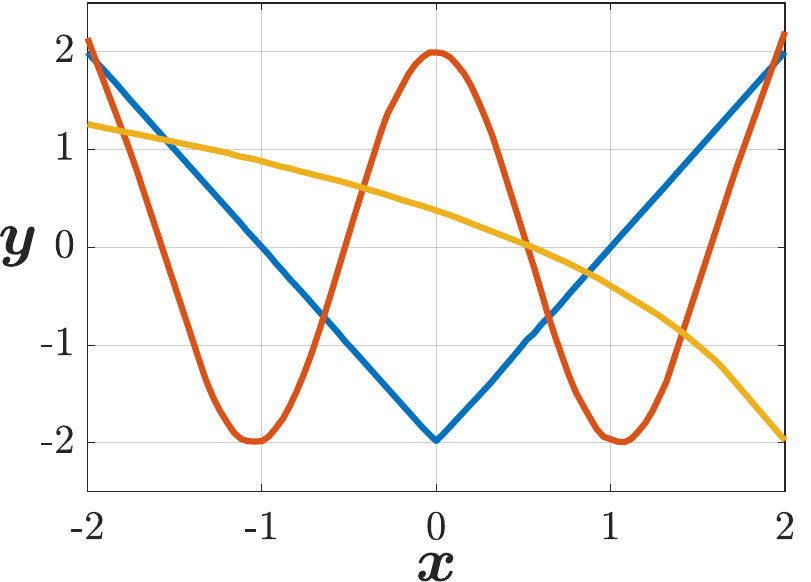}}
  \caption{Results on the regression task where the examples are simultaneously sampled from three heterogeneous functions. LEAF is the only method that smoothly recovers all functions.}
  \label{fig:func_ans}
\end{figure*}


\begin{figure*}[tbp]
  \centering
  \subcaptionbox{Paralleled \label{subfig:parallel}}{\includegraphics[width=0.28\textwidth]{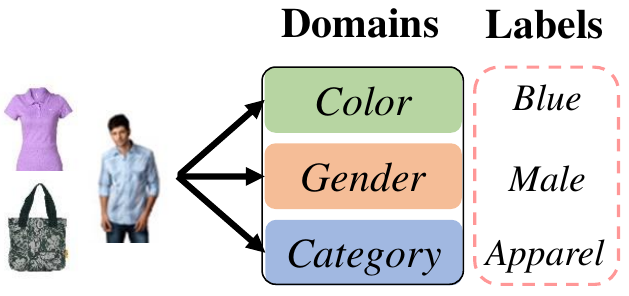}}
  \hspace{1em}
  \subcaptionbox{Hierarchical \label{subfig:hier}}{\includegraphics[width=0.23\textwidth]{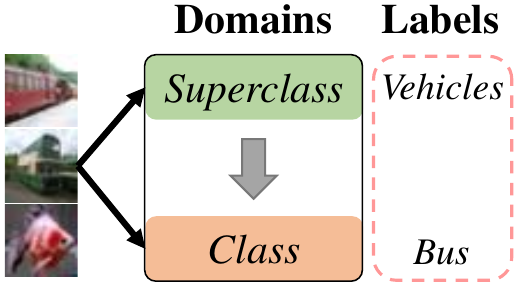}}
  \hspace{1em}
  \subcaptionbox{Opposite \label{subfig:oppo}\vspace{-0.5em}}{\includegraphics[width=0.27\textwidth]{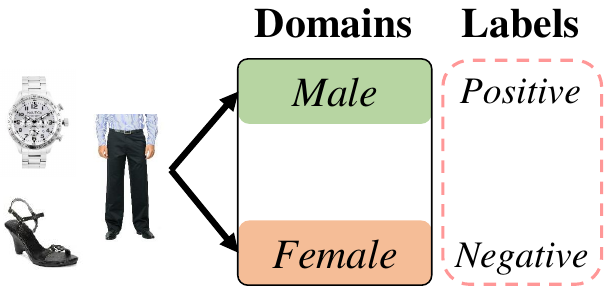}}
  \caption{Classification datasets with paralleled, hierarchical, and opposite input-output relationship.}
  \label{fig:classification}
\end{figure*}

\textbf{Baselines.} We compare LEAF with two types of baselines: (\romannumeral 1) \emph{meta-learning} that learns a global meta-model that can be rapidly fine-tuned to different functions, and (\romannumeral 2) \emph{mixture-of-experts (MoEs)} that learns a set of expert networks and a parameterized gating network that fuses the outputs of experts. For (\romannumeral 1), we use \textit{MAML}~\citep{finn_model-agnostic_2017}, a widely-used gradient-based meta-learning approach, and \textit{Modular} meta-learning~\citep{alet_modular_2018} that extends MAML with multiple modules. For (\romannumeral 2), we use \textit{sparse MoE}~\citep{shazeer_outrageously_2017} that achieves state-of-the-art results on language modeling tasks; we augment the input of the gating network with the ground-truth outputs to make it sensible of the conflict, and enforce that only one expert is selected by the gating function for each example. We also compare a variant of our method, \textit{LEAF-EM} that uses standard EM without the unimodal constraint. More details are in Appendix~\ref{appsec:regression_baselines}.

\textbf{Evaluation protocol.} For meta-learning methods, we use additional examples for the learned model to adapt to each function and plot the fitted curves after adaptation. For other methods, we directly plot the fitted curves of all models and compare the outputs with the ground-truth functions.

\textbf{Results.} As shown in Figure~\ref{fig:func_ans}, LEAF is the only method that smoothly recovers all functions. Meta-learning methods do not produce accurate predictions in fine-grained details due to their nature of few-shot fine-tuning. Sparse MoE completely fails in this task; we posit that this is because the gating network in MoE is separately parameterized and trained end-to-end, leading to optimization difficulty on conflicting data. By contrast, the analytic allocation function used by LEAF relieves the burden of learning additional gating parameters and leads to better results. Finally, LEAF-EM performs worse than LEAF, showing the advantage of using the unimodality constraint. 


\subsection{Classification}

\begin{table*}[t]
  \centering
  \caption{Results on the classification tasks with paralleled and hierarchical input-output relationship.
  }

  \fontsize{9.pt}{10.pt}{\selectfont

  \begin{tabular}{lrrrrrrrr}
    \toprule
    \multirow{3}{*}{Methods} & \multicolumn{2}{c}{\textit{Colored MNIST}} & \multicolumn{3}{c}{\textit{Fashion Product Images}}                                                                                                                                           & \multicolumn{2}{c}{\textit{CIFAR-100}}                                                                                           \\ \cmidrule(r){2-3}  \cmidrule(r){4-6} \cmidrule(r){7-8}
    & \multicolumn{2}{c}{Error (\%)} & \multicolumn{3}{c}{Error (\%)} & \multicolumn{2}{c}{Error (\%)} \\
    & \multicolumn{1}{c}{\textit{Digit}} & \multicolumn{1}{c}{\textit{Color}} & \multicolumn{1}{c}{\textit{Gender}} & \multicolumn{1}{c}{\textit{Category}} & \multicolumn{1}{c}{\textit{Color}} & \multicolumn{1}{c}{\textit{Superclass}} & \multicolumn{1}{c}{ \textit{Class}} \\ \midrule
  ERM-top $k$ & 3.04 & 0.39 & 22.80 & 18.02 & 33.15 & 27.35 & 31.20 \\
    PseudoLabel & 7.47 & 10.08 & 33.69 & 20.04 & 34.06 & 28.46 & 38.96 \\
    LabelProp  & 11.52 & 13.57 & 14.59 & 50.43 & 64.48 & 66.62 & 45.17 \\
    Sparse MoE & 15.70 & 25.40 &  37.13 & 27.76 & 44.28 & 45.63 & 51.15 \\ \midrule
    LEAF-EM & 6.83 & 4.76 & 26.67 & 6.01 & 18.70 & 32.71 & 33.83 \\
    \textbf{LEAF} & \textbf{1.70} & \textbf{0.03} & \textbf{7.87} & \textbf{1.93} & \textbf{12.85} & \textbf{21.40} & \textbf{25.05} \\
    \midrule
    MLL oracle & 1.02 & 0.00 & 8.46 &  1.17 &  9.45 & 22.08 & 26.29 \\
    MTL oracle  & 1.20 & 0.00 & 7.14  &  1.90 & 11.04 & 21.11 & 25.08 \\
    \bottomrule
    \vspace*{-1.8em}
  \end{tabular}
  }
\label{table:error_rate}
\end{table*}

To evaluate LEAF in more practical scenarios, we consider three classification tasks that involve paralleled, hierarchical, and opposite input-output relationship across the domains. These settings reflect real-world scenarios with different types of hidden contexts, as we detail below:\vspace*{-0.5em}
\begin{itemize}[leftmargin=*]
  \item \textbf{Paralleled relationship.} Data in different domains has paralleled labels that represent independent attributes of the same input, such as ``red'' and ``sphere''. This is common for uncurated web data. For this setting, we use two datasets. (1) \textit{Colored MNIST}, a variant of MNIST where each digit is colored and assigned with an additional color label, resulting in two domains: \textit{Digit} and \textit{Color}. (2) \textit{Fashion Product Images}~\citep{fashion_product_dataset}, a multi-attribute product classification dataset that involves three domains: \textit{Gender}, \textit{Category} and \textit{Color}, as shown in Figure~\ref{subfig:parallel}.
  \item \textbf{Hierarchical relationship.} Data in different domains has labels with an innate hierarchy. This is common for different datasets with different labeling purposes or different levels of expertise of the human annotators. For this setting, we use the~\textit{CIFAR-100} dataset~\citep{krizhevsky2009learning}, a widely-used image recognition dataset that consists of 100 classes with fine labels subsumed by 20 superclasses with coarse labels, resulting in two domains: \textit{Superclass} and \textit{Class} with a hierarchical label space structure, as shown in Figure~\ref{subfig:hier}.
  \item \textbf{Opposite relationship.} Data in different domains has completely opposite labels, such as ``like'' and ``dislike''. This is often due to preferences or sentiments of human annotators. For this setting, we re-label the \textit{Fashion Product Images} dataset, simulating a preference-dependent scenario with two domains: \textit{Male} and \textit{Female}, as shown in Figure~\ref{subfig:oppo}. The \textit{Male} domain treats all products with the ``male'' attribute as positive and other products as negative, while the \textit{Female} domain treats all products with the ``female'' attribute as positive and other products negative.
\end{itemize}

\textbf{Baselines.} (\romannumeral 1) \textit{ERM-top $k$}: we directly use the top-$k$ outputs of an ERM learner to fit the number of domains. (\romannumeral 2) Semi-supervised multi-label learning: we alternatively interpret learning from conflicting classification data as multi-label learning~\citep{zhang_review_2014} \emph{with missing labels} (see Appendix~\ref{appsubsec:classification_baselines} for more discussion); in light of this, we adopt two representative methods, \textit{PseudoLabel}~\citep{lee2013pseudo} and \textit{LabelProp}~\citep{iscen2019label}, to iteratively fill up the missing labels during training. (\romannumeral 3) \textit{Sparse MoE}: the same baseline as in regression, with each expert instantiated as a neural network classifier. Moreover, we consider two baselines with additional label set or domain index information. (\romannumeral 4) \textit{MLL oracle}: multi-label learning where we provide the full label set for each example. (\romannumeral 5) \textit{MTL oracle}: multi-task learning where we provide the ground-truth domain index for each example. As in regression, we also evaluate the \textit{LEAF-EM} variant of our method. More details of the datasets and baselines are in Appendix~\ref{appsubsec:cls}.

\textbf{Evaluation protocol.} For LEAF and other methods that use multiple low-level models, we report the errors based on the low-level model that yields the smallest error for each domain: given domain $d$ with test examples $\{(x_i,y_i)\}_{i=1}^{n_d}$, the error on $d$ is defined as $\min_{h\in H}\sum_{i=1}^{n_d}\mathbbm{1}[h(x_i)\ne y_i]$, which is an unbiased estimator of the expected error~\beqref{eq:obj_ex} by taking the loss function $\ell$ to be the 0-1 loss. For methods with no multi-model architecture, we compare their output label sets with the ground-truth label set for each test example and report the error on each domain.
Note that multi-label learning methods do not apply to the opposite relationship setting and we thus omit their results.

\begin{table}[t]
  \centering
  \caption{{Results on the classification task with opposite input-output relationship.
  }}
  \begin{tabular}{lcc}
    \toprule
    \multirow{2}{*}{{Methods}}  & \multicolumn{2}{c}{{Error (\%)}} \\
    & {\textit{Male}} & {\textit{Female}} \\
    \midrule
    {ERM} & {49.50} & {50.50} \\
    Sparse MoE & 82.18 & 32.43 \\
    \midrule 
    LEAF-EM & 22.10 & 81.85 \\
    \textbf{LEAF} & {\textbf{7.40}} & {\textbf{9.65}}  \\
    \midrule
    {MTL oracle} & {7.14} & {6.53}  \\
    \bottomrule
  \end{tabular}
  \label{table:error_rate_opposite}
\end{table}

\textbf{Results.}
As shown in Tables~\ref{table:error_rate_opposite} and~\ref{table:error_rate}, LEAF significantly outperforms all baselines in all three tasks, yielding smaller prediction errors on all domains. Even compared with fully supervised learning methods with label set or domain index oracles, LEAF performs competitively. Once again, we observe that LEAF consistently surpasses the \textit{LEAF-EM} variant, suggesting the benefits of a unimodal E-step.



\textbf{Visualization of low-level feature spaces.} To complement the quantitative results, we visualize the features learned by the low-level models of LEAF on the paralleled relation setting on \textit{Fashion Product Images}. As depicted in Figure~\ref{fig:visual}, the low-level models indeed learn feature spaces that separately capture the semantics of different domains.

\subsection{Additional experiments and ablation studies}

We conduct experiments to demonstrate the effectiveness and robustness of LEAF and for ablation studies. Due to space limitations, these experiments are deferred to Appendix~\ref{appsec:additional_experiments} and we provide a list of them and refer the interested readers to the corresponding sections. (\romannumeral 1) In~\ref{appsec:robustness}, we evaluate the \underline{robustness} of LEAF under both the cross- and in-domain noise; (\romannumeral 2) in~\ref{appsubsec:exp_extra_reg}, we evaluate LEAF on a real-world \underline{multi-dimensional regression} task; (\romannumeral 3) in~\ref{appsec:sample_parameter}, we study the \underline{impact of low-level model numbers} and sampling parameters on LEAF; (\romannumeral 4) in~\ref{appsec:time}, we report the training and inference \underline{time cost} of LEAF and baselines in the classification task; (\romannumeral 5) in~\ref{appsec:additional_visualization}, we provide \underline{additional visualization} on the iteration process and the low-level feature spaces learned by LEAF models.

\begin{figure}[t]
  \centering
  \includegraphics[width=0.75\linewidth]{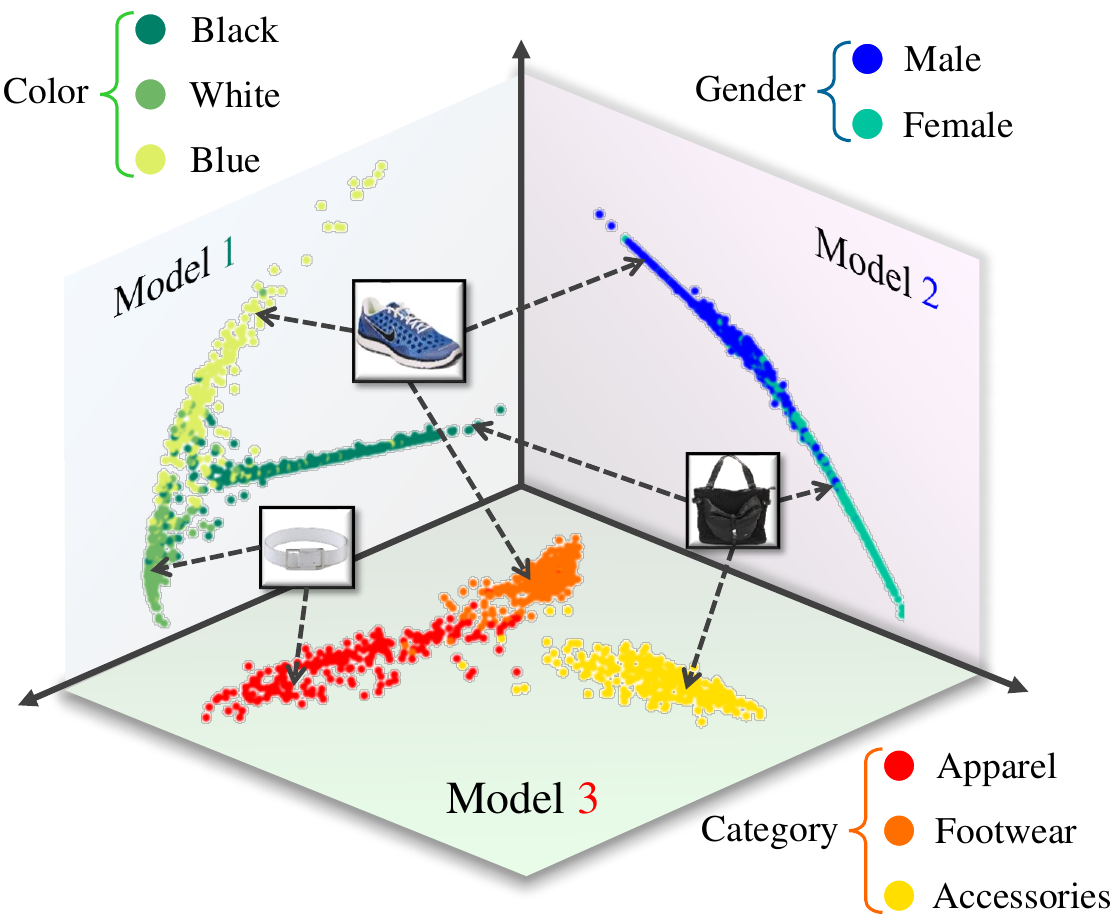}
  \vspace{-0.5em}
   \caption{Visualization of different low-level feature spaces learned by LEAF models on \textit{Fashion Product Images}.}
   \label{fig:visual}
\end{figure}

\section{Limitations and discussion}
\label{sec:dis}

In this work, we formulate a general supervised learning problem of learning from conflicting data, which captures the key difficulty in training on the data with multiple hidden contexts in an open-ended setting. Our framework is justified with theoretical guarantees. We hope that our work can serve as a stepping stone in the pursuit of more general and automatic learning frameworks, and list some limitations of the current work that are worth exploring in the future:

\textbf{Handling noisy data.} While we have empirically assessed the robustness of LEAF under both cross-domain and in-domain noise (Appendix~\ref{appsec:robustness}), it is promising to combine LEAF with robust learning approaches~\citep{zhang_generalized_2018,song_learning_2022} to develop more principled methods for distinguishing hidden contexts from data noise.

\textbf{Determining the number of prediction models.} A critical problem in LEAF is to determine the number of low-level models. In Appendix~\ref{appsec:sample_parameter}, we discuss several heuristics for this problem and provide primary experimental results. Nonetheless, developing theoretically grounded approaches for choosing or adapting the number of prediction models would be an interesting future direction.

\section*{Impact Statement}

This paper presents work whose goal is to advance the field of machine learning. There are many potential societal consequences 
of our work, none which we feel must be specifically highlighted here.


\nocite{langley00}

\bibliography{ref}
\bibliographystyle{icml2025}

\newpage
\appendix
\onecolumn

\section{Connection between LEAF and EM}
\label{appsec:connection}

\subsection{Derivation details of E-step and M-step objectives}
\label{appsec:em}

In this section, we provide the derivation details of the E-step and M-step objectives stated in Sec.~\ref{subsec:subj}. As we have mentioned in the main text, we reinterpret LEAF through the lens of likelihood maximization and seek to maximize the conditional log-likelihood
\begin{equation}
\log p_\Theta(Y\,|\,X) = \sum_{i=1}^m \sum_{j=1}^n \log \sum_{k=1}^K p_\Theta(\rvy^{ij},h_k\,|\,\rvx^{ij}),
\label{appeq:mle}
\end{equation}
where $X=(\rvx^{11},\cdots,\rvx^{mn})$ and $Y = (\rvy^{11},\cdots,\rvy^{mn})$. We then show that a lower bound of the conditional likelihood~\beqref{appeq:mle} is given by
\begin{equation}
\begin{aligned}
\log p_\Theta(Y\,|\,X) &= \sum_{i=1}^m\sum_{j=1}^n \log \sum_{k=1}^K q(h_k)\frac{p_\Theta(\rvy^{ij},h_k\,|\,\rvx^{ij})}{q(h_k)} \\
&\ge \sum_{i=1}^m\sum_{j=1}^n\sum_{k=1}^K q(h_k) \log \frac{p_\Theta(\rvy^{ij},h_k\,|\,\rvx^{ij})}{q(h_k)} \\
&= \sum_{i=1}^m\sum_{j=1}^n \sum_{k=1}^K q(h_k) \log \left[p(h_k\,|\,\rvx^{ij})p_\Theta(\rvy^{ij}\,|\,\rvx^{ij}, h_k)\right] \\
&\quad- \sum_{i=1}^m\sum_{j=1}^n \sum_{k=1}^K q(h_k) \log q(h_k),\\
&\vcentcolon= F(q,\Theta)
\end{aligned}
\end{equation}
where $q$ is an arbitrary posterior distribution over the model set $H$, and the second step in the derivation follows from Jensen's Inequality. The EM algorithm can then be considered as performing coordinate ascent alternating over $q$ and $\Theta$, corresponding to the E-step and the M-step, respectively. Using Viterbi EM~\citep{brown_mathematics_1993,spitkovsky_viterbi_2010,samdani_unified_2012}, we assume that $q$ is a Kronecker-Delta distribution, which outputs a non-zero probability mass only on a single point. Due to the unimodularity of simplex constraints~\citep{samdani_unified_2012}, this results in an analytic $q$:
\begin{equation}
\begin{aligned}
q(h) &= \delta\left(h=\arg\max_{h'\in H} \log p_\Theta(\rvy^{ij},h'\,|\,\rvx^{ij})\right) \\
&= \delta\left(h=\arg\max_{h'\in H} \log p(h'\,|\,\rvx^{ij}) p_\Theta(\rvy^{ij}\,|\,\rvx^{ij},h')\right).
\end{aligned}
\end{equation}
By assuming a uniform model prior $p(h\,|\,\rvx^{ij})$, we have
\begin{equation}
q(h) = \delta\left(h=\arg\max_{h'\in H} \log p_\Theta(\rvy^{ij}\,|\,\rvx^{ij},h')\right).
\label{appeq:posterior}
\end{equation}
Given the posterior $q$ and the uniform model prior, in the M-step we maximize
\begin{equation}
F(\Theta) = \sum_{i=1}^m\sum_{j=1}^n \sum_{k=1}^K q(h_k) \log p_\Theta(\rvy^{ij}\,|\,\rvx^{ij}, h_k).
\label{appeq:mstep}
\end{equation}

\paragraph{Adaptation to the batch setting.} The model posterior given by Equation~\beqref{appeq:posterior} is estimated based on each data pair $(\rvx^{ij},\rvy^{ij})$. Note that in each data batch, different data pairs are i.i.d. drawn from the sampled domain. We can leverage this fact to integrate the estimations of the model posterior obtained by different empirical examples.

For example, consider the $i$-th batch $Z^i = \{(\rvx^{ij},\rvy^{ij})\}_{j=1}^n$. Since different data pairs $(\rvx^{ij},\rvy^{ij}),\,j\in[n]$ are i.i.d., their corresponding posteriors $q(h)$ are also identical.\vspace{0.3em} Hence, given the data batch $Z^i = \{(\rvx^{ij},\rvy^{ij})\}_{j=1}^n$, we have
\begin{equation}
\begin{aligned}
q(h) = q(h\,|\,Z^i) &= q(h \,|\,\rvx^{ij},\rvy^{ij}) \quad \forall j\in[n] \\
&= \delta\bigg(h=\arg\max_{h'\in H} \sum_{j=1}^n \log p_\Theta(\rvy^{ij}\,|\,\rvx^{ij},h')\bigg).
\end{aligned}
\label{appeq:q_hard}
\end{equation}
Similarly, we can rewrite the overall M-step objective~\beqref{appeq:mstep}:
\begin{equation}
\begin{aligned}
F(\Theta) &= \sum_{i=1}^m\sum_{j=1}^n \sum_{k=1}^K q(h_k\,|\,\rvx^{ij},\rvy^{ij}) \log p_\Theta(\rvy^{ij}\,|\,\rvx^{ij}, h_k) \\
&= \sum_{i=1}^m \sum_{k=1}^K \sum_{j=1}^n q(h_k\,|\,Z^i) \log p_\Theta(\rvy^{ij}\,|\,\rvx^{ij}, h_k) \\
&= \sum_{i=1}^m \sum_{k=1}^K q(h_k\,|\,Z^i) \sum_{j=1}^n \log p_\Theta(\rvy^{ij}\,|\,\rvx^{ij}, h_k).
\end{aligned}
\end{equation}

We can then write the E-step and the M-step objectives in the $i$-th batch as:
\begin{itemize}
\item E-step: set $q(h) = \delta\bigg(h=\arg\max_{h'\in H} \sum_{j=1}^n \log p_\Theta(\rvy^{ij}\,|\,\rvx^{ij},h')\bigg)$.
\item M-step: given $q$, maximize
\begin{equation}
F^i(\Theta) = \sum_{k=1}^K q(h_k) \sum_{j=1}^n \log p_\Theta(\rvy^{ij}\,|\,\rvx^{ij}, h_k).
\end{equation}
\end{itemize}

\subsection{Discussion on the allocation function and model posterior estimation}
\label{appsec:dis}

In Sec.~\ref{subsec:subj}, we motivate an analytic form of the allocation function using an EM-based maximum likelihood formulation. However, the exact equivalence between the E-step in EM and the allocation function has not been rigorously proved\vspace{0.2em} since it relies on the exact forms of the loss function $\ell$ and the conditional likelihoods $p_\Theta(\rvy^{ij}\,|\,\rvx^{ij}, h)$. As a complement, in this section we consider standard regression and classification settings, and show that the exact equivalence between estimating the model posterior and the allocation function can be obtained under some natural assumptions on the loss function and conditional distribution families.

\subsubsection{Regression with isotropic Gaussian joint likelihood}
\label{appsec:regression_hard}

Consider a regression task where we assume that for each example $(\rvx^{ij},\rvy^{ij}),\,i\in[m],\,j\in[n]$\vspace{0.2em}, the joint distribution of conditional likelihoods $p_\Theta(\rvy^{ij}\,|\,\rvx^{ij},h_k),\,k\in[K]$ conforms to an isotropic Gaussian $\mathcal{N}(\bm{\mu}^{ij}_\Theta,\sigma^2\bm{I}_K)$ with mean $\bm{\mu}_\Theta^{ij} = (h_1(\rvx^{ij}),h_2(\rvx^{ij}),\cdots,h_K(\rvx^{ij}))$ and variance $\sigma^2\bm{I}_K$\vspace{0.2em}, where $\bm{I}_K$ is a $K\times K$ identity matrix. We then write Equation~\beqref{appeq:q_hard} as
\begin{equation}
\begin{aligned}
q(h) &= \delta\bigg(h=\arg\max_{h'\in H} \sum_{j=1}^n \log p_\Theta (\rvy^{ij}\,|\,\rvx^{ij},h') \bigg) \\
&= \delta\Bigg( h=\arg\max_{h'\in H} \sum_{j=1}^n \log \frac{1}{\sqrt{2\pi}\sigma} \exp\Bigg[ -\frac{(\rvy^{ij} - h'(\rvx^{ij}))^2}{2\sigma^2} \Bigg] \Bigg) \\
&= \delta\Bigg( h=\arg\max_{h'\in H} \log \frac{1}{(2\pi)^\frac{n}{2} \sigma^n} \exp\Bigg[ - \frac{\sum_{j=1}^n (\rvy^{ij} - h'(\rvx^{ij}))^2}{2\sigma^2} \Bigg] \Bigg) \\
&= \delta \bigg(h = \arg\min_{h'\in H} \sum_{j=1}^n \left(\rvy^{ij} - h'(\rvx^{ij})\right)^2 \bigg).
\end{aligned}
\end{equation}

This amounts to the empirical allocation function~\beqref{eq:g_emp} with squared loss $\ell(y,y') = (y - y')^2$.

\subsubsection{Classification with independent categorical likelihoods}
\label{appsec:classification_hard}

Consider an $L$-way classification task where we assume that for each example $(\rvx^{ij},\rvy^{ij}),\,i\in[m],\,j\in[n]$, the distributions of conditional likelihoods $p_\Theta(\rvy^{ij}\,|\,\rvx^{ij},h_k),\,k\in[K]$ conform to independent categorical distributions with parameters $(\bm{\lambda}_{\Theta,1}^{ij},\bm{\lambda}_{\Theta,2}^{ij},\cdots,\bm{\lambda}_{\Theta,K}^{ij}) = (h_1(\rvx^{ij}),h_2(\rvx^{ij}),\cdots,h_K(\rvx^{ij}))$, where each $h_k(\rvx^{ij}) = (h_{k1}(\rvx^{ij}),\cdots,h_{kL}(\rvx^{ij})),\,k\in[K]$ is an $L$-dimensional, non-negative vector that sums up to 1, representing the probability of each category. We apply an one-hot encoding to the labels and let each $\rvy^{ij} = (y^{ij}_1,\cdots,y^{ij}_L)\in \{0, 1\}^L$ with $\sum_{l=1}^L y^{ij}_l = 1$. We then write Equation~\beqref{appeq:q_hard} as
\begin{equation}
\begin{aligned}
q(h) &= \delta\bigg(h=\arg\max_{h_k\in H} \sum_{j=1}^n \log p_\Theta (\rvy^{ij}\,|\,\rvx^{ij},h_k) \bigg) \\
&= \delta \bigg(h=\arg\max_{h_k\in H} \sum_{j=1}^n \log \prod_{l=1}^L h_{kl}(\rvx^{ij})^{y^{ij}_l} \bigg) \\
&= \delta \bigg(h=\arg\max_{h_k\in H} \sum_{j=1}^n \sum_{l=1}^L y^{ij}_l \log h_{kl}(\rvx^{ij}) \bigg) \\
&= \delta \bigg(h = -\arg\min_{h_k\in H} \sum_{j=1}^n \sum_{l=1}^L y^{ij}_l \log h_{kl}(\rvx^{ij}) \bigg).
\end{aligned}
\end{equation}
This amounts to the empirical allocation function~\beqref{eq:g_emp} with cross-entropy loss $\ell(y,y') = -\sum_{l=1}^L y_l \log y'_l$.

\subsection{LEAF with standard EM}

In this section, we discuss an alternative choice of a (stochastic) allocation function using standard EM. Following the derivation in Sec.~\ref{appsec:em}, we can write the E-step and M-step objectives in the $i$-th batch as:
\begin{itemize}
\item E-step: set $q(h) = p(h\,|\,Z^i)$.
\item M-step: given $q$, maximize
\begin{equation}
F^i(\Theta) = \sum_{k=1}^K q(h_k) \sum_{j=1}^n \log p_\Theta(\rvy^{ij}\,|\,\rvx^{ij}, h_k).
\end{equation}
\end{itemize}
The difference is that in standard EM, the maximization in the E-step is done by setting $q$ as the posterior $p(h\,|\,Z^i)$. By Bayes' theorem, we have
\begin{equation}
\begin{aligned}
p(h\,|\,Z^i) &= \frac{p(h)\, p(Z^i\,|\,h)}{p(Z^i)} \\
&= \frac{p(h) \prod_{j=1}^n p(\rvx^{ij}\,|\,h) \prod_{j=1}^n p_\Theta(\rvy^{ij}\,|\,\rvx^{ij},h)}{p(Z^i)} \\
&= C \prod_{j=1}^n p_\Theta(\rvy^{ij}\,|\,\rvx^{ij},h),
\end{aligned}
\label{appeq:q_standard}
\end{equation}
where $C$ can be viewed as a normalization constant. As in Sec.~\ref{appsec:dis}, in the sequel we provide our implementation of this LEAF variant in both regression and classification settings.

\subsubsection{Regression with isotropic Gaussian joint likelihood}
\label{appsec:regression_em}

We consider the same regression setting as in Sec.~\ref{appsec:regression_hard}, where the joint distribution of conditional likelihoods $p_\Theta(\rvy^{ij}\,|\,\rvx^{ij},h_k),k\in[K]$ conforms to an isotropic Gaussian. We then write Equation~\beqref{appeq:q_standard} as
\begin{equation}
\begin{aligned}
p(h\,|\,Z^i) &= C \prod_{j=1}^n p_\Theta(\rvy^{ij}\,|\,\rvx^{ij},h) \\
&= C \frac{1}{(2\pi)^\frac{n}{2}\sigma^n}\exp\Bigg[-\frac{\sum_{j=1}^n \left(\rvy^{ij} - h(\rvx^{ij})\right)^2}{2\sigma^2}\Bigg].
\end{aligned}
\end{equation}
This gives the normalized model posteriors
\begin{equation}
\begin{aligned}
p(h_k\,|\,Z^i) &= \frac{p(h_k\,|\,Z^i)}{\sum_{m=1}^K p(h_m\,|\,Z^i)} \\
&= \frac{\exp\left[-\frac{\sum_{j=1}^n \left(\rvy^{ij} - h_k\left(\rvx^{ij}\right)\right)^2}{2\sigma^2}\right]}{\sum_{m=1}^K \exp\left[-\frac{\sum_{j=1}^n \left(\rvy^{ij} - h_m\left(\rvx^{ij}\right)\right)^2}{2\sigma^2}\right]},\,\forall k\in[K].
\end{aligned}
\label{appeq:regression_em_sample}
\end{equation}
In other words, each model in the model set is sampled according to a softmax probability determined by their negative squared losses on the data batch and a variance parameter $2\sigma^2$, which reflects the noise of the regression outputs. Since $\sigma^2$ is generally unknown, in practice we treat this as an additional hyperparameter (details are in Sec.~\ref{appsec:exp}).

\subsubsection{Classification with independent categorical likelihoods}
\label{appsec:classification_em}

We consider the same classification setting as in Sec.~\ref{appsec:classification_hard}, where the distributions of conditional likelihoods $p_\Theta(\rvy^{ij}\,|\,\rvx^{ij},h_k),k\in[K]$ conform to independent categorical distributions. We then write Equation~\beqref{appeq:q_standard} as
\begin{equation}
\begin{aligned}
p(h\,|\,Z^i) &= C\prod_{j=1}^n p_\Theta(\rvy^{ij}\,|\,\rvx^{ij},h) \\
&= C\prod_{j=1}^n \prod_{l=1}^L h_{kl}(\rvx^{ij})^{y^{ij}_l}.
\end{aligned}
\end{equation}
This gives the normalized posteriors
\begin{equation}
\begin{aligned}
p(h_k\,|\,Z^i) &= \frac{p(h_k\,|\,Z^i)}{\sum_{m=1}^K p(h_m\,|\,Z^i)} \\
&= \frac{\exp\left[y_l^{ij}\log h_{kl}(\rvx^{ij}) \right]}{\sum_{m=1}^K\exp\left[y_l^{ij}\log h_{ml}(\rvx^{ij}) \right]},\,\forall k\in[K] . 
\end{aligned}
\label{appeq:classification_em_sample}
\end{equation}
In other words, each model in the model set is sampled according to a softmax probability determined by their negative cross-entropy losses on the data batch.

\begin{algorithm}[t]
   \caption{LEAF}
   \label{algo:osl}
\begin{algorithmic}[1]
   \REQUIRE Model set $H = \{h_k\}_{k=1}^K$ parameterized by $\Theta=\{\theta_k\}_{k=1}^K$, optimizer $\textsc{Opt}$, the number of batches $m$, batch size $n$, meta-batch size $B$.\vspace{0.2em}
   \FOR {$i=1,2,\cdots,m$ batches}
       \STATE Sample the examples $\{(\rvx^{ij},\rvy^{ij}\}_{j=1}^n$.
       \STATE Select a model from the model set: ${\hat{h}}^* \in \arg\min_{h\in H} \sum_{j=1}^n \ell\left[h(\rvx^{ij}),\rvy^{ij} \right]$.
       \STATE Compute the gradients $\mathrm{grad}^i \leftarrow \nabla_\Theta \frac{1}{n}\sum_{j=1}^n\ell\left[\hat{h}^*(\rvx^{ij}),\rvy^{ij}\right]$.
       \IF { $i \equiv 0\,(\mathrm{mod}\  B)$ \textbf{or} $i=m$}
         \STATE Average the gradients over $B$ batches: \vspace{0.3em} $\mathrm{grad} \leftarrow \frac{1}{B}\sum_{p=0}^{B-1}\mathrm{grad}^{i-p}$.
         \STATE Update model parameters $\Theta \leftarrow \textsc{Opt}(\Theta,\,\mathrm{grad})$.\vspace{0.2em}
       \ENDIF
   \ENDFOR
\end{algorithmic}
\end{algorithm}

\subsection{Convergence analysis of LEAF}
\label{appsec:convergence}
In this section, we discuss the convergence guarantee of LEAF based on existing convergence guarantee of (generalized) EM. It has been shown that under fairly general conditions on the monotonicity of the likelihood with respect to posterior $q$ in each iteration, the likelihood values in (generalized) EM converge to stationary points with zero gradients~\citep{mclachlan_em_2007}. On the other hand, in the analysis above, we show that for the setting of regression with squared loss and classification with cross-entropy loss, LEAF is essentially equivalent to likelihood maximization using a type of generalized EM method. These two facts then indicate that LEAF can converge to a stationary point of the empirical error~\beqref{eq:obj_emp} under the monotonicity condition that each iteration decreases the empirical error~\beqref{eq:obj_emp} by improving the allocations. In practice, this monotonicity condition may not strictly hold if we use parameterized DNNs as the low-level models of LEAF, due to the non-convexity of the training objective. However, empirically we found that LEAF always converge stably in all of our experiments, which we believe is due to stochastic gradient descent~\citep{allen-zhu_convergence_2019,du_gradient_2019}.

Another potential concern is that it may seem from first glance that our algorithm is somewhat greedy since we only update the model that performs best in each batch, and one may thus wonder whether our method is vulnerable to the local optima of data allocations during optimization. However, we did not observe this in practice. We conjecture that an important reason for this lies in the property of conflicting data itself: consider a classification problem with conflicting data, a model trained on some examples provably gives worse-than-chance performance in expectation on the examples conflicting with those which the model is trained on. Therefore, conflicting examples are not likely to be assigned to the same model consistently during the training process as long as the models have sufficient capacity.

\section{Pseudocode}
\label{appsec:subjective}

We provide the pseudocode of LEAF in Algorithm~\ref{algo:osl}. In our implementation, we introduce a meta-batch size hyperparameter $B$ and average the updates in each $B$ meta-batches. This is similar to the mini-batch training in standard supervised learning in order to obtain more stable parameter updates and improve the robustness against the noise during optimization.

\section{Proofs}
\label{appsec:proof}

In this section, we provide the full proofs of Theorems~\ref{theo:uniq},~\ref{theo:pac} and prove a generalized result of Theorem~\ref{theo:gen} in the main text. For better exposition, we restate each theorem before its proof.

\subsection{Proof of Theorem~\ref{theo:uniq}}
\label{appsubsec:theo_uni}

\begin{theorem}[Identifiability of conflicting domains]
Assume that all target functions are realizable. Then, the following two propositions are equivalent:
\begin{enumerate}[label=(\roman*)]
\vspace*{-0.5em}
\item For the domain distribution $Q$ satisfying that $Q(d) > 0$ for every $d\in D$, the expected error of LEAF satisfies $er(H) = 0$.
\item For each domain $d = \langle P,c \rangle$ in $D$, there exists $h\in H$ such that $\mathbb{E}_{\rvx\sim P}\ell\left[h(\rvx),c(\rvx)\right] = 0$.
\end{enumerate}
\end{theorem}



\begin{proof}
Recall the definition of the expected error $er(H)$~\beqref{eq:obj_ex} under the expected allocation function~\beqref{eq:g_emp}:
\begin{equation}
er(H) = \mathbb{E}_{\rvd=\langle P,c\rangle\sim Q}\min_{h\in H}\mathbb{E}_{\rvx\sim P}\ell\left[ h(\rvx),c(\rvx) \right].
\label{appeq:25}
\end{equation}
The derivation of proposition (\romannumeral 1) from proposition (\romannumeral 2) directly follows from the definition~\beqref{appeq:25} by expanding on the first expectation operator over the domain distribution $Q$.

Reversely, we prove proposition (\romannumeral 2) from proposition (\romannumeral 1) by contradiction. Suppose proposition (\romannumeral 2) is false. Then, there exists $d_i=\langle P_i,c_i \rangle\in D\,(i\in[N])$ such that for all $h_k\in H\,(k\in [K])$, $\mathbb{E}_{\rvx\sim P_i}\ell\left[h_k(\rvx), c_i(\rvx)\right] > 0$ holds. We thus have
\begin{equation}
\begin{aligned}
er(H) &= \mathbb{E}_{\rvd=\langle P,c\rangle\sim Q}\min_{h\in H}\mathbb{E}_{\rvx\sim P}\ell\left[h(\rvx),c(\rvx)\right] \\
&\ge Q(d_i)\min_{h\in H}\mathbb{E}_{\rvx\sim P_i}\ell\left[h(\rvx),c_i(\rvx)\right] \\
& > 0.
\end{aligned}
\end{equation}
This is in contradiction with proposition (\romannumeral 1). Therefore, proposition (\romannumeral 2) must hold if proposition (\romannumeral 1) is true. This completes the proof.
\end{proof}

\subsection{Proof of Theorem~\ref{theo:pac}}
\label{appsubsec:theo_pac}


We start by revisiting the definition of PAC learnability~\citep{valiant_theory_1984}.

\begin{definition}[PAC learnability, single-domain case]
\label{def:pac_single}
A target function class (concept class) $\mathcal{C}$ is said to be {\rm PAC-learnable} if there exists an algorithm $\mathcal{A}$ and a polynomial function $\mathrm{poly}(\cdot,\cdot,\cdot,\cdot)$ such that for any $\epsilon > 0$ and $\delta > 0$, for all distributions $P$ on $\mathcal{X}$ and for any target function $c\in\mathcal{C}$, the following holds for any sample size $l\ge \mathrm{poly}\left(\nicefrac{1}{\epsilon},\nicefrac{1}{\delta},\kappa,\mathrm{size}(c)\right)$:
\begin{equation}
\mathbb{P}_{S \sim P^l}\left[ \mathbb{E}_{\rvx\sim P} \ell\left[ h_S(\rvx),c(\rvx) \right] \le \epsilon \right] \ge 1 - \delta,
\end{equation}
where $h_S\in\mathcal{H}$ is the hypothesis returned by algorithm $\mathcal{A}$ after receiving a labeled sample $S$, $\kappa$ is a number such that the computational cost of representing any element $x\in\mathcal{X}$ is at most $O(\kappa)$, and $\mathrm{size}(c)$ denotes the maximal cost of the computational representation of $c\in\mathcal{C}$.
\end{definition}

Since the goal of our setting is to learn \emph{multiple} target functions simultaneouslly (correponding to all target functions in the domain set), we introduce a natural extension of Definition~\ref{def:pac_single} to the multi-domain case. Given a domain set $D=\{\langle P_i, c_i\rangle\}_{i=1}^N$ with cardinality $N$ and confliting rank $R(D)$, we denote by $C\vcentcolon=\{c \mid \langle P, c\rangle\in D\}$ its corresponding \emph{target function set} that contains all target functions belonging to the domains in the domain set $D$, and denote by $\mathbb{C}$ a target function set class in which each element is a target function set. Following the formulation in Sec.~\ref{subsec:pro}\vspace{0.2em}, we allow the algorithm $\mathcal{A}$ to return a \emph{hypothesis set} $H_S=\{h_i\}_{i=1}^K\in\mathcal{H}^K$ (rather than a single hypothesis as in the standard PAC learning model) given a labeled sample $S$ that is drawn according to the bilevel sampling process as in Sec.~\ref{subsec:pro}.

\vspace{0.3em}
\begin{definition}[PAC learnability, multi-domain case]
\label{def:pac_multi}
A target function set class $\mathbb{C}$ is said to be {\rm PAC-learnable} if there exists an algorithm $\mathcal{A}$ and a polynomial function $\mathrm{poly}(\cdot,\cdot,\cdot,\cdot)$
such that for any $\epsilon > 0$ and $\delta > 0$, for all distributions $Q$ on $D=\{\langle P_i,c_i \rangle\}_{i=1}^N$, for all distributions $P_1,\cdots,P_N$ on $\mathcal{X}$, and for any target function set $C\in\mathbb{C}$, the following holds for any sample size $mn\ge \mathrm{poly}(\nicefrac{1}{\epsilon},\nicefrac{1}{\delta},\kappa,\mathrm{size}(C))$:
\begin{equation}
\forall c\in C,\,\exists h\in H_S\ \text{such that}\ 
\mathbb{P}_{S}\left[\mathbb{E}_{\rvx\sim P}\ell\left[h(\rvx),c(\rvx)\right]\le \epsilon \right]\ge 1 - \delta,
\label{eq:pac}
\end{equation}
where $H_S\in\mathcal{H}^K$ is the hypothesis set returned by algorithm $\mathcal{A}$ after receiving a labeled sample $S$, $\kappa$ is a number such that the computational cost of representing any element $x\in\mathcal{X}$ is at most $O(\kappa)$, and $\mathrm{size}(C)$ denotes the maximal cost of the computational representation of $C\in\mathbb{C}$.
\end{definition}

Before proving Theorem~\ref{theo:pac}, we first prove a lemma:

\begin{lemma}
\label{lemma:pac}
If LEAF is PAC-learnable according to Definition~\ref{def:pac_multi}, then there exists an algorithm $\mathcal{A}$ such that for any $\delta > 0$, the hypothesis set  $H_S$ returned by $\mathcal{A}$ after receiving a labeled sample $S$ with infinite examples satisfies
\begin{equation}
\mathbb{P}_S \left[er(H_S) = 0\right] \ge 1 - \delta.
\end{equation}
\end{lemma}

\begin{proof}
Recall the definition of the expected error $er(H)$~\beqref{eq:obj_ex} under the expected allocation function~\beqref{eq:g_emp}:
\begin{equation}
er(H) = \mathbb{E}_{\rvd=\langle P,c\rangle\sim Q}\min_{h\in H}\mathbb{E}_{\rvx\sim P}\ell\left[ h(\rvx),c(\rvx) \right].
\label{appeq:28}
\end{equation}
Suppose that LEAF is PAC-learnable. Then, Equation~\beqref{eq:pac} holds for the sample size $mn\ge \mathrm{poly}(\nicefrac{1}{\epsilon},\nicefrac{1}{\delta},\kappa,\mathrm{size}(C))$. By setting $\epsilon\to 0$, we have that for any $\delta > 0$,
\begin{equation}
\forall c\in C,\,\exists h\in H_S\ \text{such that}\ 
\mathbb{P}_{S}\left[\mathbb{E}_{\rvx\sim P}\ell\left[h(\rvx),c(\rvx)\right] = 0 \right]\ge 1 - \delta.
\end{equation}
holds when the sample size is infinite. This equals to
\begin{equation}
\mathbb{P}_S\left[
\min_{h\in H_S}\mathbb{E}_{\rvx\sim P}\ell \left[h(\rvx),c(\rvx) \right] = 0\right] \ge 1 - \delta
\label{appeq:30}
\end{equation}
for every $d=\langle P, c \rangle \in D$. By replacing $\delta$ with $\nicefrac{\delta}{N}$ in Equation~\beqref{appeq:30} and a union bound argument, we have that for any distribution $Q$ on $D$,
\begin{equation}
\mathbb{P}_S\left[\mathbb{E}_{\rvd=\langle P,c\rangle\sim Q}
\min_{h\in H_S}\mathbb{E}_{\rvx\sim P}\ell \left[h(\rvx),c(\rvx) \right] = 0\right] \ge 1 - \delta
\label{appeq:31}
\end{equation}
holds. Combining Equations~\beqref{appeq:28}~\beqref{appeq:31} completes the proof.
\end{proof}

Now we are ready to prove Theorem~\ref{theo:pac}.

\begin{theorem}[PAC learnability]
For a domain set $D$ with conflict rank $R(D)$ and a model set with cardinality $K$, we must have $K\ge R(D)$ for LEAF to be PAC-learnable.
\end{theorem}

\begin{proof}
Recall the definition of conflict rank (Definition~\ref{def:rank}):
\begin{equation}
R(D) = \max_{\mathcal{S}\subseteq \mathcal{X},\lambda(\mathcal{S})>0} \min_{x\in \mathcal{S}} \Big|\big\{ c (x) \mid \langle P,c\rangle \in D, x\in \mathrm{supp}(P) \big\}\Big|,
\label{appeq:rank}
\end{equation}
where $\lambda(\mathcal{S})$ is the Lebesgue measure of $\mathcal{S}$.
In what follows, we drop the dependency of $R(D)$ on the domain set $D$ for simplicity, and denote the conflict rank of the domain set by $R$.
From Equation~\beqref{appeq:rank}, we have that there exists a subset $\mathcal{S}\subseteq \mathcal{X}$ such that $\lambda(\mathcal{S}) > 0$, and there are $R$ distinct domains $d_{(1)}=\langle P_{(1)},c_{(1)} \rangle,\cdots,d_{(R)}=\langle P_{(R)}, c_{(R)}\rangle$ in the domain set $D$ satisfying
\begin{equation}
c_{(1)}(x)\ne c_{(2)}(x)\ne\cdots\ne c_{(R)}(x)
\end{equation}
for all $x\in\mathcal{S}$.
We now prove the main result by contradiction.\vspace{0.2em} Suppose that $K < R$, by the drawer principle, we have that there exists $c^\dagger\in \left\{c_{(1)},c_{(2)},\cdots,c_{(R)}\right\}$ such that
\begin{equation}
\forall h\in H,\ c^\dagger(x)\ne h(x)
\end{equation}
holds for all $x\in\mathcal{S}$. We then have that
\begin{equation}
\begin{aligned}
er(H) &= \mathbb{E}_{\rvd=\langle P,c\rangle \sim Q}\min_{h\in H}\mathbb{E}_{\rvx\sim P}\ell\left[h(\rvx),c(\rvx)\right] \\
&\ge Q\left(d^\dagger\right)\min_{h\in H}\mathbb{E}_{\rvx\sim P^\dagger}\ell\left[h(\rvx),c^\dagger(\rvx)\right] \\
&= Q\left(d^\dagger\right)\min_{h\in H} \int_{x\in \mathcal{X}} \ell\left[h(x),c^\dagger(x)\right]\,\mathrm{d}P^\dagger(x) \\
&\ge Q\left(d^\dagger\right)\min_{h\in H} \int_{x\in \mathcal{S}} \ell\left[h(x),c^\dagger(x)\right]\,\mathrm{d}P^\dagger(x) \\
& > 0
\end{aligned}
\label{appeq:tmp37}
\end{equation}
holds with probability 1. On the other hand, by Lemma~\ref{lemma:pac} we have that if LEAF is PAC-learnable, then there exists an algorithm that outputs a hypothesis set $H_S$ satisfying $er(H_S) = 0$ with probability $1-\delta$ for any $\delta > 0$, which is in contradiction with~\beqref{appeq:tmp37}. Therefore, we must have $K\ge R$.
\end{proof}

\subsection{Generalized Theorem~\ref{theo:gen} and proof}
\label{appsubsec:theo_gen}

In this section, we state and prove a generalized version of Theorem~\ref{theo:gen} in the main text. In Sec.~\ref{subsec:pro}, we formulate LEAF by a bilevel sampling procedure: first, $m$ domains are i.i.d. drawn from a domain distribution $Q$ over the domain set $D$, resulting in $m$ batches; then, for each batch, $n$ inputs are i.i.d. drawn from the corresponding domain's input distribution and labeled by its target function. Here we relax the second step of this sampling process by allowing the data in each batch to be drawn from \emph{other} domains apart from the sampled domain. This models a general and more realistic scenario where each data batch is not absolutely ``clean'' and may contain a small portion of ``noisy'' data from other domains.

Formally, we introduce a \emph{mixing ratio} parameter $\alpha\in[0, 1)$ that captures the expected percentage of noisy examples: for each data pair in the $i$-th ($i\in[n]$) batch, with probability $1-\alpha$ it is drawn from the sampled domain $\rvd^i = \langle P^i,c^i \rangle $, and with probability $\alpha$ it is drawn from other domains in the domain set, i.e., $D'^i\subset D\setminus \{\rvd^i\}$. This amounts to sampling from a mixture of input distributions $P^i_\mathrm{mix} = (1-\alpha) P^i + \alpha \tilde{P}^i$, where $\tilde{P}^i$ is the input distribution of one or a mixture of multiple domains from $D\setminus \{\rvd^i\}$. Correspondingly, we also introduce a ``mixed'' target function denoted by $c^i_\mathrm{mix} = (1-\alpha)c^i + \alpha \tilde{c}^i$, which indicates that the input will be labeled by $c^i$ if sampled from $P^i$, while labeled by $\tilde{c}^i$ if sampled from $\tilde{P}^i$.

Clearly, the mixing ratio $\alpha$ needs to be small for effective training, but we would also like to theoretically investigate how the generalization error of LEAF relates to $\alpha$.

Now we are ready to state Theorem~\ref{theo:gen_gen}, which is a generalization of Theorem~\ref{theo:gen} in the main text:

\setcounter{theorem}{3}
\begin{theorem}[Generalization error upper bound]
\label{theo:gen_gen}
For any mixing ratio $\alpha\in [0, 1)$ and any $\delta\in(0,1]$, the following inequality holds uniformly for all model sets $H\in \mathcal{H}^K$ with probability at least $1-\delta$:
\begin{subequations}
\begin{align}
er(H)&\le \frac{1}{1-\alpha}\, \widehat{er}(H) + \frac{1+\alpha}{1-\alpha}\left( \sqrt{\frac{\mathscr{D}(\bar{\mathcal{S}})\left(\ln \nicefrac{2m}{\mathscr{D}(\mathcal{\bar{S}})}+1\right) - \ln\nicefrac{\delta}{24}}{m}} + \frac{1}{m}\right) \label{appeq:t1}\\
&\qquad+ \frac{1}{1-\alpha}\sum_{k=1}^N\left(\frac{m_k}{m}\sqrt{\frac{\mathscr{D}(\mathcal{S})\left(\ln\nicefrac{2m_k n}{\mathscr{D}(\mathcal{S})}+1\right) - \ln\nicefrac{\delta}{12N}}{m_k n}} + \frac{1}{mn}\right) \label{appeq:t2}\\
&\qquad+ \frac{2}{1-\alpha}\left(\sqrt{\frac{\mathscr{D}(\mathcal{S})\left(\ln\nicefrac{2n}{\mathscr{D}(\mathcal{S})} + 1\right) - \ln\nicefrac{\delta}{24m} }{n}} + \frac{1}{n}\right),\label{appeq:t3}
\end{align}
\label{appeq:bound}%
\end{subequations}
where $\mathcal{\bar{S}} = \{\min_{h\in \mathcal{H}} \langle P,c\rangle\mapsto\mathbb{E}_{\rvx\sim P}\ell\left[h(\rvx;\Theta),c(\rvx)\right]\}$ is the function set of the domain-wise expected error, $\mathcal{S} =\{(x,y)\mapsto \ell\left[h(x;\theta),y\right]\}$ is the function set of the instance-wise error, $m_k = \sum_{i=1}^m \mathbbm{1}\left(\rvd^i=d_k\right), k\in [N]$ is the count of the $k$-th domain in the domain set $D$ during the sampling process, and $\mathscr{D}(\cdot)$ is the Vapnik-Chervonenkis (VC) dimension~\citep{vapnik_uniform_1971} for some set of real functions.\footnote{The standard VC-dimension is defined on sets of indicator functions and thereby only applies to the binary classification setting. Here we adopt a generalized notion of VC-dimension, which is defined on real function sets and thus applies to general loss functions, while having no impact on the forms of the generalization error bounds. See Chap. 3 of~\citep{vapnik_nature_1999} for more details and discussion.}
\end{theorem}

\begin{remark}
It is straightforward to recover Theorem~\ref{theo:gen} in the main text using Theorem~\ref{theo:gen_gen} by setting the mixing ratio $\alpha = 0$.
\end{remark}

\subsubsection{Technical lemmas and intermediate results}

Before giving the proof of Theorem~\ref{theo:gen_gen} (generalized Theorem~\ref{theo:gen}), we first introduce several technical lemmas.

\begin{lemma}
Let $\left\{E_i \right\}_{i=1}^n$ \vspace{0.3em} be a set of events satisfying $\mathbb{P}\left(E_i\right)\ge 1 - \delta_i$, with some $\delta_i \ge 0,i=1,\cdots,n$. Then, $\mathbb{P}\left(\bigcap_{i=1}^n E_i\right)\ge 1 - \sum_{i=1}^n \delta_i$.
\label{lemma:union}
\end{lemma}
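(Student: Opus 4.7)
The plan is to prove this standard union-bound-style statement by passing to complements and invoking Boole's inequality. Concretely, I would set $E_i^{c}$ to be the complement of $E_i$, so the hypothesis $P(E_i) \ge 1 - \delta_i$ becomes $P(E_i^{c}) \le \delta_i$. Then, by De Morgan's law, $\bigcap_{i=1}^n E_i = \bigl(\bigcup_{i=1}^n E_i^{c}\bigr)^{c}$, and taking complementary probabilities gives
\[
P\!\left(\bigcap_{i=1}^n E_i\right) = 1 - P\!\left(\bigcup_{i=1}^n E_i^{c}\right).
\]

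Next, I would apply the countable subadditivity of probability measures (Boole's inequality), which yields $P\bigl(\bigcup_{i=1}^n E_i^{c}\bigr) \le \sum_{i=1}^n P(E_i^{c}) \le \sum_{i=1}^n \delta_i$. Substituting back into the previous identity produces the desired lower bound $P\bigl(\bigcap_{i=1}^n E_i\bigr) \ge 1 - \sum_{i=1}^n \delta_i$.

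An equivalent route, if one prefers to avoid quoting Boole's inequality directly, is a one-line induction on $n$: the base case $n=1$ is immediate, and for the inductive step one uses $P(A \cap B) \ge P(A) + P(B) - 1$ with $A = \bigcap_{i=1}^{n-1} E_i$ and $B = E_n$, then applies the inductive hypothesis to $A$. There is no real obstacle here — the lemma is essentially a restatement of the union bound in complement form — so the proof will be only a couple of lines, and the only care needed is to state clearly that $\{E_i\}$ is a finite (or at most countable) collection so that subadditivity applies without measurability issues.
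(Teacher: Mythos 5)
Your proof is correct: passing to complements, applying De Morgan, and invoking Boole's inequality is exactly the canonical justification of this lemma, and your inductive alternative via $P(A\cap B)\ge P(A)+P(B)-1$ is equally sound. Note that the paper itself states this lemma without proof, treating it as the standard union bound, so your argument supplies precisely the routine verification the authors omitted.
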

\begin{proof}
Straightforward using the union bound.
\end{proof}

The next lemma is a well-known result in statistical learning theory~\cite{vapnik_nature_1999}, and is a standard tool for upper-bounding the generalization error of empirical and expected data distributions:

\begin{lemma}[Single-task generalization error upper bound~\citep{vapnik_nature_1999}]
\label{lemma:single_gen}
Let $\mathcal{Z}$ be a space, and let $\Lambda$ be a parameter space. Let $A\le \mathcal{Q}(z;\alpha)\le B,\alpha\in \Lambda$ be a measurable and bounded real function set, of which the VC-dimension is $\mathscr{D}(\mathcal{Q})$. Let $\left\{\rvz_i\right\}_{i=1}^n$ be the data set sampled i.i.d. from a distribution $P$ on $\mathcal{Z}$ with size $n$. Then, for any $\delta\in (0,1]$, the following inequality holds with probability at least $1-\delta$:
\begin{equation}
\left| \mathbb{E}_{\rvz\sim P} \mathcal{Q}(\rvz;\alpha) - \frac{1}{n} \sum_{i=1}^n \mathcal{Q}\left(\rvz_i;\alpha\right) \right| \le (B - A)\sqrt{\epsilon(n)} + \frac{1}{n},
\end{equation}
where 
\begin{equation}
\epsilon(n) = \frac{\mathscr{D}(\mathcal{Q})\left(\ln\nicefrac{2n}{\mathscr{D}(\mathcal{Q})} + 1\right) - \ln\nicefrac{\delta}{4}}{n}.
\end{equation}
\end{lemma}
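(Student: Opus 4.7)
The plan is to follow Vapnik's classical uniform-convergence pipeline: symmetrization via a ghost sample, reduction of the real-valued supremum to a finite indicator problem via the level-set representation, Sauer--Shelah on the induced indicator class, and a Hoeffding tail bound, then inversion. Concretely, I would first introduce an independent ghost sample $z_1',\ldots,z_n'\sim P$ and, via a standard conditioning argument, prove the symmetrization inequality
\[
P\Bigl(\sup_{\alpha}\bigl|\mathbb{E}\mathcal{Q}(z,\alpha)-\tfrac{1}{n}\sum_i \mathcal{Q}(z_i,\alpha)\bigr|>\epsilon\Bigr)\le 2\,P\Bigl(\sup_{\alpha}\bigl|\tfrac{1}{n}\sum_i\mathcal{Q}(z_i',\alpha)-\tfrac{1}{n}\sum_i\mathcal{Q}(z_i,\alpha)\bigr|>\tfrac{\epsilon}{2}\Bigr),
\]
valid for $n$ large enough that a Chebyshev-type correction is absorbable (this is where the $+1/n$ slack will eventually arise).

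Next I would handle the real-valued nature of $\mathcal{Q}$ via the layer-cake representation $\mathcal{Q}(z,\alpha)-A=(B-A)\int_0^1\mathbbm{1}[\mathcal{Q}(z,\alpha)-A>(B-A)t]\,dt$. Interchanging supremum and integral, uniform deviations of $\mathcal{Q}$ are bounded by uniform deviations of the indicator class $\mathcal{I}=\{\mathbbm{1}[\mathcal{Q}(\cdot,\alpha)>\beta]:\alpha\in\Lambda,\beta\in[A,B]\}$, scaled by $(B-A)$. Under Vapnik's definition of $\mathsf{VC}(\mathcal{Q})$ for real-valued classes (via the associated threshold/subgraph class), $\mathcal{I}$ has the same VC dimension $d:=\mathsf{VC}(\mathcal{Q})$, so the Sauer--Shelah lemma bounds the number of distinct indicator patterns on any $2n$-point sample by $(2en/d)^d$.

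I would then condition on the joint sample $Z\cup Z'$ and invoke the permutation-symmetry of the ghost-sample construction: the swap variables $\sigma_i\in\{-1,+1\}$ are uniformly random, so for each fixed pattern in the Sauer--Shelah covering, Hoeffding's inequality gives a tail of the form $2\exp(-n\epsilon^2/[2(B-A)^2])$. A union bound across the at most $(2en/d)^d$ patterns yields
\[
P\bigl(\sup_\alpha|\mu(\alpha)-\hat\mu(\alpha)|>\epsilon\bigr)\le 4\,(2en/d)^d\exp\!\left(-\tfrac{n\epsilon^2}{2(B-A)^2}\right).
\]
Setting the right-hand side equal to $\delta$ and solving for $\epsilon$ recovers $(B-A)\sqrt{\epsilon(n)}$ after absorbing the $\ln(2en/d)=\ln(2n/d)+1$ factor into Vapnik's form; the extra additive $1/n$ collects the symmetrization/Chebyshev slack and any small-$n$ correction needed to make the inequality uniform in $n$.

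The main obstacle is cleanly passing from the real-valued supremum to a finite indicator union bound while preserving the $(B-A)$ scaling. This is where Vapnik's specific notion of VC-dimension for real-valued classes does the heavy lifting: without it, one would have to resort to covering-number arguments (e.g., $L_\infty$ or $L_1(P_n)$ brackets) that yield bounds of a different shape. Once that reduction is granted, the remaining symmetrization--Sauer--Shelah--Hoeffding chain is standard, and I would not grind through its constants since the result is lifted verbatim from Vapnik's monograph.
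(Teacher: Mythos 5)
The paper offers no proof of this lemma at all---it is imported verbatim from Vapnik's monograph via citation---and your sketch is exactly the classical argument behind that cited result: ghost-sample symmetrization, the level-set reduction that makes Vapnik's real-valued VC dimension of $\mathcal{Q}$ coincide with that of the induced indicator class, Sauer--Shelah, a Hoeffding tail plus union bound, and inversion to recover the $(B-A)\sqrt{\epsilon(n)}$ form with $\ln(2en/d)=\ln(2n/d)+1$. Your reconstruction is sound at the level of detail attempted (the unverified constants and the informal accounting of the additive $\nicefrac{1}{n}$ slack are acceptable for a result the paper itself lifts without proof), so there is nothing to reconcile beyond the citation.
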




Given the technical lemmas, we now present some intermediate results that will be necessary in the proof of Theorem~\ref{theo:gen_gen}.
We define the following shorthand:

\begin{subequations}
\begin{align}
h^*_{i,\mathrm{clean}} \in& \mathop{\arg\min}_{h\in H} \mathbb{E}_{\rvx\sim P^i} \ell \left[h(\rvx),c^i(\rvx)\right] \label{eq:h_ex_clean}, \\
\hat{h}^*_i \in& \mathop{\arg\min}_{h\in H}\sum_{j=1}^n \ell\left[h\left(\rvx^{ij}\right), \rvy^{ij}\right] \label{eq:h_emp},
\end{align}
\label{eq:h}%
\end{subequations}
where $i\in[m]$ denotes the $i$-th batch of LEAF, and $H$ is the model set. Note that here the $\arg\max$ operator is defined on the \emph{model set} $H$ rather than the hypothesis space $\mathcal{H}$. For each batch with index $i$, we also define a clean data batch $Z^i_\mathrm{clean} = \{(\rvx^{ij}_\mathrm{clean},\rvy^{ij}_\mathrm{clean})\}_{j=1}^n$\vspace{0.2em} that is i.i.d. drawn from $\rvd^i$, corresponding to the case with mixing ratio $\alpha =0$. Recall that we use superscripts to denote the sampling index, e.g., $\rvd^i = \langle P^i,c^i \rangle$ denotes the $i$-th domain sample, which can be any domain in the domain set $D$.

We first prove a lemma that upper-bounds the generalization error induced by the discrepancy between the empirical and expected domain distributions:

\begin{lemma}[Domain estimation error]
\label{lemma:domain_estimation_error}
Let $Q$ be a distribution over the domain set $D$, and let $\rvd^1=\langle P^1,c^1\rangle,\cdots,\rvd^m=\langle P^m,c^m \rangle$ be domain examples i.i.d. drawn from $Q$. Then, for any $\delta\in(0,1]$, the following inequality holds uniformly for all model sets $H\in\mathcal{H}^K$ with probability at least $1 - \delta$:
\begin{equation}
\begin{aligned}
\Bigg|\frac{1}{m} \sum_{i=1}^m \min_{h\in H} \mathbb{E}_{\rvx\sim P^i}\ell\left[h(\rvx),c^i(\rvx)\right] &- \mathbb{E}_{d_i\sim Q}\min_{h\in H}\mathbb{E}_{\rvx\sim P_i}\ell\left[h(\rvx),c_i(\rvx) \right]\Bigg| \\
&\le \sqrt{\dfrac{\mathscr{D}(\mathcal{\bar{S}})\ln\left(\nicefrac{2m}{\mathscr{D}(\mathcal{\bar{S}})}+1\right) - \ln\nicefrac{\delta}{4}}{m}} + \dfrac{1}{m}.
\end{aligned}
\end{equation}
where $\mathcal{\bar{S}} = \{\langle P,c\rangle\mapsto\min_{h\in H}\mathbb{E}_{\rvx\sim P}\ell\left[h(\rvx;\Theta),c(\rvx)\right]\}$ is the function set of the domain-wise expected error, and $\mathscr{D}(\cdot)$ is the VC-dimension of some set of real functions.
\end{lemma}
\begin{proof}
We define a ``domain-averaged'' loss function over the model set $H$ and the domain $d = \langle P,c \rangle$:
\[
\bar{\ell}(H, d) \vcentcolon= \min_{h\in H} \mathbb{E}_{\rvx\sim P} \ell\left[h(\rvx),c(\rvx)\right].
\]
Then, we have
\[
\begin{aligned}
&\frac{1}{m} \sum_{i=1}^m \min_{h\in H} \mathbb{E}_{\rvx\sim P^i}\ell\left[h(\rvx),c^i(\rvx)\right] - \mathbb{E}_{d_i\sim Q}\min_{h\in H}\mathbb{E}_{\rvx\sim P_i}\ell\left[h(\rvx),c_i(\rvx) \right] \\
&=\frac{1}{m}\sum_{i=1}^m \bar{\ell}\left(H, \rvd^i\right) - \mathbb{E}_{d_i\sim Q} \bar{\ell}\left(H, d_i\right),
\end{aligned}
\]
Finally, replacing the function set $\mathcal{Q}$ in Lemma~\ref{lemma:single_gen} with $\bar{\mathcal{S}}=\{\langle P,c\rangle\mapsto\min_{h\in H} \mathbb{E}_{\rvx\sim P} \ell\left[h(\rvx;\Theta),c(\rvx)\right] \}$ completes the proof.
\end{proof}

We then prove a lemma that upper-bounds the empirical error induced by the discrepancy between the empirical allocation function~\beqref{eq:g_emp} and the expected allocation function~\beqref{eq:g_emp}:

\begin{lemma}[Allocation estimation error]
\label{lemma:sub_estimation_error}
Let $\{(\rvx^{ij},\rvy^{ij})\}_{j=1}^n$ be the data batch drawn in the $i$-th $(i\in [m])$ batch of LEAF with mixing ratio $\alpha$. Then, for any $\alpha\in[0,1)$ and any $\delta\in(0,1]$, the following inequality holds uniformly for all model sets $H\in\mathcal{H}^K$ with probability at least $1-\delta$:
\begin{equation}
\begin{aligned}
&\Bigg|\frac{1}{m} \sum_{i=1}^m \frac{1}{n} \sum_{j=1}^n \ell\left[ h^*_{i,\mathrm{clean}}\left(\rvx^{ij}_\mathrm{clean}\right),\rvy^{ij}_\mathrm{clean} \right] - \frac{1}{m} \sum_{i=1}^m \frac{1}{n} \sum_{j=1}^n \ell\left[ \hat{h}^*_i\left(\rvx^{ij}\right), \rvy^{ij} \right]\Bigg| \\
&\quad \le \alpha\cdot er(H) + \alpha \cdot \sqrt{\dfrac{\mathscr{D}(\bar{\mathcal{S}})\ln\left(\nicefrac{2m}{\mathscr{D}(\bar{\mathcal{S}})}+1\right) - \ln\nicefrac{\delta}{12}}{m}} + \dfrac{\alpha}{m} \\
&\qquad+ 2\sqrt{\frac{\mathscr{D}(\mathcal{S})(\ln \nicefrac{2n}{\mathscr{D}(\mathcal{S})} + 1) - \ln \nicefrac{\delta}{12m}}{n}} + \frac{2}{n},
\end{aligned}
\end{equation}
where $\mathcal{\bar{S}} = \{\langle P,c\rangle\mapsto\min_{h\in H}\mathbb{E}_{\rvx\sim P}\ell\left[h(\rvx;\Theta),c(\rvx)\right]\}$ is the function set of the domain-wise expected error, $\mathcal{S} =\{(x,y)\mapsto \ell\left[h(x;\theta),y\right]\}$ is the function set of the sample-wise error, and $\mathscr{D}(\cdot)$ is the VC-dimension of some set of real functions.
\end{lemma}

\begin{proof}
We have the following decomposition:
\begin{equation}
\begin{aligned}
&\frac{1}{m}\sum_{i=1}^m\frac{1}{n}\sum_{j=1}^n\ell\left[ h^*_{i,\mathrm{clean}}\left(\rvx^{ij}_\mathrm{clean}\right),\rvy^{ij}_\mathrm{clean} \right] - \frac{1}{m}\sum_{i=1}^m\frac{1}{n}\sum_{j=1}^n\ell\left[ \hat{h}^*_i\left(\rvx^{ij}\right), \rvy^{ij} \right] \\
&= \frac{1}{m}\sum_{i=1}^m\left\{\frac{1}{n}\sum_{j=1}^n\ell\left[ h^*_{i,\mathrm{clean}}\left(\rvx^{ij}_\mathrm{clean}\right),\rvy^{ij}_\mathrm{clean} \right] - \mathbb{E}_{\rvx\sim P^i}\ell\left[h^*_{i,\mathrm{clean}}(\rvx),c^i(\rvx)\right]\right\} \\
&\quad + \frac{1}{m}\sum_{i=1}^m\left\{\mathbb{E}_{\rvx\sim P^i}\ell\left[h^*_{i,\mathrm{clean}}(\rvx),c^i(\rvx)\right] - \mathbb{E}_{\rvx\sim P_\mathrm{mix}^i}\ell\left[\hat{h}^*_i(\rvx),c^i_\mathrm{mix}(\rvx)\right] \right\} \\
&\quad + \frac{1}{m}\sum_{i=1}^m\left\{\mathbb{E}_{\rvx\sim P_\mathrm{mix}^i}\ell\left[\hat{h}^*_i(\rvx),c^i_\mathrm{mix}(\rvx)\right] - \frac{1}{n}\sum_{j=1}^n\ell\left[ \hat{h}^*_i\left(\rvx^{ij}\right),\rvy^{ij} \right]\right\}, 
\end{aligned}
\label{appeq:subjective_bound}
\end{equation}
in which the original difference is decomposed into three terms in the RHS.

We can use the following technique to bound both the first term and the last term: by substituting $\mathcal{Q}$ in Lemma~\ref{lemma:single_gen} with $\mathcal{S}=\{(x,y)\mapsto \ell\left[h(x;\theta),y\right]\}$ and replacing $\delta$ with $\nicefrac{\delta}{3m}$, every term in the summation operator can be upper-bounded by $\sqrt{\dfrac{\mathscr{D}(\mathcal{S})\ln\left(\nicefrac{2n}{\mathscr{D}(\mathcal{S})}+1\right) - \ln\nicefrac{\delta}{12m}}{n}} + \dfrac{1}{n}$ with probability at least $1-\nicefrac{\delta}{3m}$; we then combine the terms in the summation operator using Lemma~\ref{lemma:union}, showing that both the first term and the last term in the RHS of~\beqref{appeq:subjective_bound} can be bounded by $\sqrt{\dfrac{\mathscr{D}(\mathcal{S})\ln\left(\nicefrac{2n}{\mathscr{D}(\mathcal{S})}+1\right) - \ln\nicefrac{\delta}{12m}}{n}} + \dfrac{1}{n}$ with probability at least $1-\nicefrac{\delta}{3}$.

There remains the middle term in the RHS of~\beqref{appeq:subjective_bound}, for which we have
\begin{equation}
\begin{aligned}
&\frac{1}{m}\sum_{i=1}^m\left\{\mathbb{E}_{\rvx\sim P^i}\ell\left[h^*_{i,\mathrm{clean}}(\rvx),c^i(\rvx)\right] - \mathbb{E}_{\rvx\sim P_\mathrm{mix}^i}\ell\left[\hat{h}^*_i(\rvx),c^i_\mathrm{mix}(\rvx)\right] \right\} \\
&\quad = \frac{1-\alpha}{m} \sum_{i=1}^m \left\{\mathbb{E}_{\rvx\sim P^i}\ell\left[h^*_{i,\mathrm{clean}}(\rvx),c^i(\rvx)\right] - \mathbb{E}_{\rvx\sim P^i}\ell\left[\hat{h}^*_i(\rvx),c^i(\rvx)\right] \right\} \\
&\qquad + \frac{\alpha}{m} \sum_{i=1}^m\left\{\mathbb{E}_{\rvx\sim P^i}\ell\left[h^*_{i,\mathrm{clean}}(\rvx),c^i(\rvx)\right] - \mathbb{E}_{\rvx\sim \tilde{P}^i}\ell\left[\hat{h}^*_i(\rvx),\tilde{c}^i(\rvx)\right] \right\}
\end{aligned}
\label{appeq:tmp1}
\end{equation}
By the definitions in~\beqref{eq:h_ex_clean} and~\beqref{eq:h_emp}, the first term in the RHS of~\beqref{appeq:tmp1} is non-positive and thus can be upper-bounded by zero. For the other term, we have
\[
\begin{aligned}
&\frac{\alpha}{m} \sum_{i=1}^m\left\{\mathbb{E}_{\rvx\sim P^i}\ell\left[h^*_{i,\mathrm{clean}}(\rvx),c^i(\rvx)\right] - \mathbb{E}_{\rvx\sim \tilde{P}^i}\ell\left[\hat{h}^*_i(\rvx),\tilde{c}^i(\rvx)\right] \right\} \\
&\le \frac{\alpha}{m} \sum_{i=1}^m \mathbb{E}_{\rvx\sim P^i}\ell\left[h^*_{i,\mathrm{clean}}(\rvx),c^i(\rvx)\right] \\
&= \alpha \cdot er(H) + \alpha\left\{\frac{1}{m} \sum_{i=1}^m \mathbb{E}_{\rvx\sim P^i}\ell\left[h^*_{i,\mathrm{clean}}(\rvx),c^i(\rvx)\right] - \mathbb{E}_{d_i\sim Q}\min_{h\in H}\mathbb{E}_{\rvx\sim P_i}\ell\left[h(\rvx),c_i(\rvx) \right] \right\}
\end{aligned}
\]
where the inequality is due to the nonnegativity of the loss function $\ell$. By replacing $\delta$ with $\nicefrac{\delta}{3}$ in Lemma~\ref{lemma:domain_estimation_error}, the second term in RHS of the last line can be upper-bounded by $\alpha\cdot\sqrt{\dfrac{\mathscr{D}(\mathcal{\bar{S}})\ln\left(\nicefrac{2m}{\mathscr{D}(\mathcal{\bar{S}})}+1\right) - \ln\nicefrac{\delta}{12}}{m}} + \dfrac{\alpha}{m}$ with probability at least $1-\nicefrac{\delta}{3}$.

Finally, combining the above three bounds corresponding to the terms in the RHS of~\beqref{appeq:subjective_bound} using Lemma~\ref{lemma:union} completes the proof.
\end{proof}

\subsubsection{Proof of Theorem~\ref{theo:gen_gen} (generalized Theorem~\ref{theo:gen})}
\label{appsubsec:proof_gen}

Now we are ready to give the proof of Theorem~\ref{theo:gen_gen}.

\begin{proof}
Combining the empirical and expected objectives~\beqref{eq:obj_emp}~\beqref{eq:obj_ex} with the allocation functions~\beqref{eq:g_emp}~\beqref{eq:g_emp}, we have
\begin{equation}
er(H) - \widehat{er}(H) = \mathbb{E}_{d_i\sim Q}\min_{h\in H}\mathbb{E}_{\rvx\sim P_i}\ell\left[h(\rvx),c_i(\rvx)\right] - \frac{1}{m}\sum_{i=1}^m\min_{h\in H}\frac{1}{n}\sum_{j=1}^n\ell\left[h\left(\rvx^{ij}\right),\rvy^{ij}\right],
\end{equation}
where for every $i\in[m]$, the data $\{(\rvx^{ij},\rvy^{ij})\}_{j=1}^n$ is sampled from $P^i_\mathrm{mix} = (1-\alpha) P^i + \alpha \tilde{P}^i$ and labeled by $c^i_\mathrm{mix} = (1 - \alpha) c^i + \alpha \tilde{c}^i$ with mixing ratio $\alpha\in [0,1)$.

We then have the following decomposition:
\begin{equation}
\begin{aligned}
&er(H) - \widehat{er}(H) = \left\{\mathbb{E}_{d_i\sim Q}\min_{h\in H}\mathbb{E}_{\rvx\sim P_i}\ell\left[h(\rvx),c_i(\rvx)\right] - \frac{1}{m}\sum_{i=1}^m\min_{h\in H}\mathbb{E}_{\rvx\sim P^i}\ell\left[h(\rvx),c^i(\rvx)\right]\right\} \\
&\quad+ \left\{\frac{1}{m}\sum_{i=1}^m\min_{h\in H}\mathbb{E}_{\rvx\sim P^i}\ell\left[h(\rvx),c^i(\rvx)\right] - \frac{1}{m}\sum_{i=1}^m\frac{1}{n}\sum_{j=1}^n\ell\left[h^*_{i,\mathrm{clean}}\left(\rvx^{ij}_\mathrm{clean}\right),\rvy^{ij}_\mathrm{clean}\right]\right\}  \\
&\quad+ \left\{\frac{1}{m}\sum_{i=1}^m\frac{1}{n}\sum_{j=1}^n\ell\left[h^*_{i,\mathrm{clean}}\left(\rvx^{ij}_\mathrm{clean}\right),\rvy^{ij}_\mathrm{clean}\right] - \frac{1}{m}\sum_{i=1}^m\min_{h\in H}\frac{1}{n}\sum_{j=1}^n\ell\left[h\left(\rvx^{ij}\right),\rvy^{ij}\right]\right\}.
\end{aligned}
\label{appeq:decomposition}
\end{equation}
Applying the definitions in~\beqref{eq:h_ex_clean} and~\beqref{eq:h_emp}, we rewrite the equation above:
\begin{equation}
\begin{aligned}
&er(H) - \widehat{er}(H) = \left\{\mathbb{E}_{d_i\sim Q}\min_{h\in H}\mathbb{E}_{\rvx\sim P_i}\ell\left[h(\rvx),c_i(\rvx)\right] - \frac{1}{m}\sum_{i=1}^m\min_{h\in H}\mathbb{E}_{\rvx\sim P^i}\ell\left[h(\rvx),c^i(\rvx)\right]\right\} \\
&\quad+ \left\{\frac{1}{m}\sum_{i=1}^m\mathbb{E}_{\rvx\sim P^i}\ell\left[h^*_{i,\mathrm{clean}}(\rvx),c^i(\rvx)\right] - \frac{1}{m}\sum_{i=1}^m\frac{1}{n}\sum_{j=1}^n\ell\left[h^*_{i,\mathrm{clean}}\left(\rvx^{ij}_\mathrm{clean}\right),\rvy^{ij}_\mathrm{clean}\right]\right\}  \\
&\quad+ \left\{\frac{1}{m}\sum_{i=1}^m\frac{1}{n}\sum_{j=1}^n\ell\left[h^*_{i,\mathrm{clean}}\left(\rvx^{ij}_\mathrm{clean}\right),\rvy^{ij}_\mathrm{clean}\right] - \frac{1}{m}\sum_{i=1}^m\frac{1}{n}\sum_{j=1}^n\ell\left[\hat{h}^*_i\left(\rvx^{ij}\right),\rvy^{ij}\right]\right\},
\end{aligned}
\end{equation}
in which the generalization error of LEAF is decomposed into three terms. These terms represent different aspects that impact the generalization error:
\begin{itemize}
\item The first term represents the discrepancy between the expected and empirical domain distributions.
\item The second term represents the discrepancy between the expected and empirical data distributions in each batch.
\item The last term represents the discrepancy between the expected and empirical allocation functions. Note that when the mixing ratio $\alpha > 0$, this term also accounts for the discrepancy induced by the noise in the data batches.
\end{itemize}

In the sequel, we bound these terms separately to derive the final generalization error upper bound:
\paragraph{Bounding the first term.} By replacing $\delta$ with $\nicefrac{\delta}{6}$ in Lemma~\ref{lemma:domain_estimation_error}, we have that
\begin{equation}
\begin{aligned}
&\mathbb{E}_{d_i\sim Q}\min_{h\in H}\mathbb{E}_{\rvx\sim P_i}\ell\left[h(\rvx),c_i(\rvx)\right] - \frac{1}{m}\sum_{i=1}^m\min_{h\in H}\mathbb{E}_{\rvx\sim P^i}\ell\left[h(\rvx),c^i(\rvx)\right] \\
&\le\sqrt{\dfrac{\mathscr{D}(\mathcal{\bar{S}})\ln\left(\nicefrac{2m}{\mathscr{D}(\mathcal{\bar{S}})}+1\right) - \ln\nicefrac{\delta}{24}}{m}} + \dfrac{1}{m}
\end{aligned}
\label{appeq:first}
\end{equation}
holds uniformly for all $H\in\mathcal{H}^K$ with probability at least $1 - \nicefrac{\delta}{6}$.


\paragraph{Bounding the second term.} By rearranging the terms and use $m_k = \sum_{i=1}^m \mathbbm{1}\left(c^i=c_k\right),\,i\in[m],\,k\in[N]$, we have
\begin{equation}
\begin{aligned}
&\frac{1}{m}\sum_{i=1}^m\mathbb{E}_{\rvx\sim P^i}\ell\left[h^*_{i,\mathrm{clean}}(\rvx),c^i(\rvx)\right] - \frac{1}{m}\sum_{i=1}^m\frac{1}{n}\sum_{j=1}^n\ell\left[h^*_{i,\mathrm{clean}}\left(\rvx_\mathrm{clean}^{ij}\right),\rvy_\mathrm{clean}^{ij}\right] \\
&= \frac{1}{m}\sum_{i=1}^m\left\{ \mathbb{E}_{\rvx\sim P^i} \ell\left[h^*_{i,\mathrm{clean}}(\rvx),c^i(\rvx)\right] - \frac{1}{n} \sum_{j=1}^n \ell\left[ h^*_{i,\mathrm{clean}}\left(\rvx^{ij}_\mathrm{clean}\right),\rvy^{ij}_\mathrm{clean} \right] \right\} \\
&= \frac{1}{m}\sum_{k=1}^{N}\left\{ m_k\cdot\mathbb{E}_{\rvx\sim P_k} \ell\left[ h^*_{k,\mathrm{clean}}(\rvx),c_k(\rvx) \right] - \frac{1}{n}\sum_{j=1}^{nm_k}\ell\left[ h^*_{k,\mathrm{clean}}\left(\rvx_{k,\mathrm{clean}}^j,\rvy_{k,\mathrm{clean}}^j\right) \right] \right\} \\
&= \frac{1}{m}\sum_{k=1}^{N} m_k \left\{ \mathbb{E}_{\rvx\sim P_k} \ell\left[ h^*_{k,\mathrm{clean}}(\rvx),c_k(\rvx) \right] - \frac{1}{nm_k}\sum_{j=1}^{nm_k}\ell\left[ h^*_{k,\mathrm{clean}}\left(\rvx_{k,\mathrm{clean}}^{j},\rvy_{k,\mathrm{clean}}^{j}\right) \right] \right\}.
\end{aligned}
\label{appeq:decomposition_2}
\end{equation}
In the above, we rearrange the total $m$ data batches according to the domains they belong to. With a little abuse of notation, in the third and fourth lines we use $h_k^*$ to denote the hypothesis that yields the smallest expected error in the $k$-th domain in the domain set $D$, i.e., $h_k^*=\arg\min_{h\in H}\mathbb{E}_{\rvx\sim P_k}\ell\left[ h(\rvx),c_k(\rvx) \right],k\in[N]$. The above transformation aggregates the domain examples so that the examples from the same domains that emerge multiple times can be accumulated and jointly considered, which leads to a sharper and more realistic error bound. In this way, we use $\{(\rvx_{k,\mathrm{clean}}^j, \rvy_{k,\mathrm{clean}}^j)\}_{j=1}^{n m_k},\,k\in[N]$ to denote the aggregated data that belongs to the $k$-th domain in $D$. By replacing $\delta$ with $\nicefrac{\delta}{3N}$ in Lemma~\ref{lemma:single_gen}, for every $k\in[N]$ and $\delta\in(0,1]$ we have that
\[
\begin{aligned}
&\mathbb{E}_{x\sim P_k} \ell\left[ h^*_k(x),c_k(x) \right] - \dfrac{1}{nm_k}\sum_{j=1}^{nm_k}\ell\left[ h^*_k\left(x_{kj},y_{kj}\right) \right] \\
&\quad \le \sqrt{\dfrac{\mathscr{D}(\mathcal{S})\left(\ln\nicefrac{2m_k n}{\mathscr{D}(\mathcal{S})}+1\right) - \ln\nicefrac{\delta}{12N}}{m_k n}} + \dfrac{1}{m_k n}
\end{aligned}
\]
holds uniformly for all $H\in\mathcal{H}^K$ with probability at least $1-\nicefrac{\delta}{3N}$. By Lemma~\ref{lemma:union}, we have that
\begin{equation}
\begin{aligned}
&\frac{1}{m}\sum_{i=1}^m\mathbb{E}_{\rvx\sim P^i}\ell\left[h^*_{i,\mathrm{clean}}(\rvx),c^i(\rvx)\right] - \frac{1}{m}\sum_{i=1}^m\frac{1}{n}\sum_{j=1}^n\ell\left[h^*_{i,\mathrm{clean}}\left(\rvx_\mathrm{clean}^{ij}\right),\rvy_\mathrm{clean}^{ij}\right] \\
&\quad\le \sum_{k=1}^N\left(\dfrac{m_k}{m}\sqrt{\dfrac{\mathscr{D}(\mathcal{S})\left(\ln\nicefrac{2m_k n}{\mathscr{D}(\mathcal{S})}+1\right) - \ln\nicefrac{\delta}{12N}}{m_k n}} + \dfrac{1}{mn}\right)
\end{aligned}
\label{appeq:second}
\end{equation}
holds uniformly for all $H\in\mathcal{H}^K$ with probability at least $1-\nicefrac{\delta}{3}$.

\paragraph{Bounding the last term.} By replacing $\delta$ with $\nicefrac{\delta}{2}$ in Lemma~\ref{lemma:sub_estimation_error}, we have that
\begin{equation}
\begin{aligned}
&\frac{1}{m}\sum_{i=1}^m\frac{1}{n}\sum_{j=1}^n\ell\left[h^*_{i,\mathrm{clean}}\left(\rvx^{ij}_\mathrm{clean}\right),\rvy^{ij}_\mathrm{clean}\right] - \frac{1}{m}\sum_{i=1}^m\frac{1}{n}\sum_{j=1}^n\ell\left[\hat{h}^*_i\left(\rvx^{ij}\right),\rvy^{ij}\right] \\
&\quad \le \alpha\cdot er(H) + \alpha \cdot \sqrt{\dfrac{\mathscr{D}(\bar{\mathcal{S}})\ln\left(\nicefrac{2m}{\mathscr{D}(\bar{\mathcal{S}})}+1\right) - \ln\nicefrac{\delta}{24}}{m}} + \dfrac{\alpha}{m} \\
&\qquad+ 2\sqrt{\frac{\mathscr{D}(\mathcal{S})(\ln \nicefrac{2n}{\mathscr{D}(\mathcal{S})} + 1) - \ln \nicefrac{\delta}{24m}}{n}} + \frac{2}{n}
\end{aligned}
\label{appeq:third}
\end{equation}
holds uniformly for all $H\in\mathcal{H}^K$ with probability at least $1-\nicefrac{\delta}{2}$.

Finally, combining the above three bounds~\beqref{appeq:first}~\beqref{appeq:second}~\beqref{appeq:third} using Lemma~\ref{lemma:union} gives that
\begin{equation}
\begin{aligned}
er(H) - \widehat{er}(H) &\le \sqrt{\dfrac{\mathscr{D}(\mathcal{\bar{S}})\ln\left(\nicefrac{2m}{\mathscr{D}(\mathcal{\bar{S}})}+1\right) - \ln\nicefrac{\delta}{24}}{m}} + \dfrac{1}{m} \\
&\quad + \sum_{k=1}^N\left(\dfrac{m_k}{m}\sqrt{\dfrac{\mathscr{D}(\mathcal{S})\left(\ln\nicefrac{2m_k n}{\mathscr{D}(\mathcal{S})}+1\right) - \ln\nicefrac{\delta}{12N}}{m_k n}} + \dfrac{1}{mn}\right) \\
&\quad + \alpha\cdot er(H) + \alpha \cdot \sqrt{\dfrac{\mathscr{D}(\bar{\mathcal{S}})\ln\left(\nicefrac{2m}{\mathscr{D}(\bar{\mathcal{S}})}+1\right) - \ln\nicefrac{\delta}{24}}{m}} + \dfrac{\alpha}{m} \\
&\qquad+ 2\sqrt{\frac{\mathscr{D}(\mathcal{S})(\ln \nicefrac{2n}{\mathscr{D}(\mathcal{S})} + 1) - \ln \nicefrac{\delta}{24m}}{n}} + \frac{2}{n}
\end{aligned}
\end{equation}
holds uniformly for all $H\in\mathcal{H}^K$ with probability at least $1 - \delta$. A simple transformation of the above inequality completes the proof.
\end{proof}

\subsubsection{Discussion on the extensions to other single-task SL bounds}
\label{appsubsec:theory_extension}

In this section, we discuss how to extend our generalization error upper bounds in Theorem~\ref{theo:gen} and Theorem~\ref{theo:gen_gen} using \emph{any} given single-task SL generalization error upper bounds. To this end, note that key to our proof in~\ref{appsubsec:proof_gen} is the error decompositions in equations~\beqref{appeq:subjective_bound},~\beqref{appeq:decomposition}, and~\beqref{appeq:decomposition_2}, on top of which we can reuse the single-task generalization error upper bound in Lemma~\ref{lemma:single_gen}. Thus, extending to other single SL generalization error bounds (e.g., bounds in~\citet{zhou_non-vacuous_2019,lotfi_pac-bayes_2022}) could be easily done by replacing Lemma~\ref{lemma:single_gen} by the bounds in those works with the form
\begin{equation}
 er(h) - \widehat{er}(h) \le \mathrm{upper\ bound},
\end{equation}
with $er(h)$ and $\hat{er}(h)$ being the expected and empirical errors of the hypothsis $h\in\mathcal{H}$. For example,~\citet{lotfi_pac-bayes_2022} uses a variant of the following conventional PAC-Bayesian bound~\citep{mcallester_pac-bayesian_1999}:
\begin{equation}
\underset{h \sim Q}{\mathbb{E}}[er(h)] \leq \underset{h \sim Q}{\mathbb{E}}[\hat{er}(h)]+\sqrt{\frac{\mathbb{K} \mathbb{L}(Q \| P)+\log (n / \delta)+2}{2 n-1}},
\label{appeq:pac_bayesian}
\end{equation}
where $P$ and $Q$ are prior and posterior distributions on $\mathcal{H}$, which come from PAC-Bayesian modeling. Applying equation~\beqref{appeq:pac_bayesian} instead of Lemma~\ref{lemma:single_gen} would then give us a PAC-Bayesian bound for the generalization error of LEAF.

\section{Experimental details}
\label{appsec:exp}

In this section, we provide additional details on our experiments.
All of our experiments were conducted based on PyTorch~\citep{paszke_pytorch_2019} using NVIDIA V100 and NVIDIA A100 GPUs. The datasets we use can be downloaded from public data sources, and does not contain personally identifiable
information or offensive content.

\subsection{Regression}
\label{appsubsec:reg}

We consider a regression scenario in which the examples are randomly sampled from three heterogeneous functions in absolute, sinusoidal and logarithmic function families:
\begin{equation}
  \begin{aligned}
    y&=2\left|x\right|-2 + \delta_\mathrm{noise},\ -2\le x \le 2 \\
    y&=2\sin\left(3x+\dfrac{\pi}{2}\right) + \delta_{\mathrm{noise}},\ -2 \le x \le 2 \\
    y&=\frac{3}{2}\log\left(-x+\dfrac{5}{2}\right)-1 + \delta_\mathrm{noise},\ -2 \le x \le 2,
  \end{aligned}
  \label{eq:regression_functions}%
\end{equation}
where $\delta_\mathrm{noise}\sim \mathcal{N}(0, 0.01^2)$ is the additive Gaussian noise. We consider each function as a domain, and set the parameters of LEAF as $m=250,n=2$, i.e., a total number of 500 examples are collected in 250 batches, each with two examples. In each batch, we randomly select a function and uniformly sample the examples from the selected function.

\textbf{Model and optimizer.} We use a model set with three neural network regressors with the same architecture. Each network consists of five fully-connected layers, with 32 neurons in each layer and ReLU nonlinearities between the layers. We use a standard SGD optimizer with $learning\ rate=0.05$, $momentum=0.9$, and $weight\ decay=10^{-4}$ to train these models.

\subsubsection{Baseline details}
\label{appsec:regression_baselines}


\textbf{MAML.} We use a standard PyTorch implememtation of MAML from GitHub\footnote{\url{https://github.com/dragen1860/MAML-Pytorch} (MIT license)}. We adopt the same network architecture for MAML as ours, and use the following hyperparameters:

$Shot=2,Evalation=100, Outer\ step\ size=0.05, Inner\ step\ size=0.015,\\ Inner\ grad\ steps=2, Eval\ grad\ steps=5, Eval\ iters=10, Iterations=20000$.

In each batch, we use two support examples and another two query examples for MAML to fine-tune its model. Note that this amounts to a batch size of four, which is larger than LEAF (with only two examples in each batch).

\textbf{Modular meta-learning.} We use the official PyTorch implementation of modular meta-learning from GitHub\footnote{\url{https://github.com/FerranAlet/modular-metalearning} (MIT license)}. Since modular meta-learning uses more neural network modules than LEAF, we apply a smaller network for each module, with the following hyperparameters:

$Shot=2, Support=2, Network=Linear\ 1-16-16-1, Num\_modules=5,\\ Composer=Sum, meta\_ lr=0.003, Steps=3000$. 

Like MAML, in each batch we use two support examples and another two query examples for modular meta-learning for fine-tuning, resulting in a batch size of four.

\textbf{Sparse MoE.} We implement this baseline following the design in~\citep{shazeer_outrageously_2017}, where the output of the MoE model can be written as
\begin{equation}
y = \sum_{i=1}^K G(x)_i E_i(x),
\end{equation}
where $G(x)$ and $E_i(x)$ are the output of the gating network and the output of the $i$-th expert network for the input $x$, respectively. We use the same number of experts as the number of low-level models in LEAF, and use the same network architecture as in LEAF for each expert. In the original work~\citep{shazeer_outrageously_2017}, the MoE model is used in language modeling tasks with no explict conflict in the training data, thus the gating network can modulate the outputs of experts only using the input $x$. However, when dealing with conflicting data, only having access to the input is not sufficient for gating since different examples may have distinct outputs for the same input. Therefore, we augment the input of the gating network with the ground-truth output $y$, resulting in
\begin{equation}
y = \sum_{i=1}^K G(x,y)_i E_i(x).
\end{equation}
In~\citep{shazeer_outrageously_2017}, the authors choose a noisy top-$k$ gating strategy to obtain a sparse gating function. We follow this design and further set $k=1$ as in LEAF, since we only want one model (expert) to be activated given an example. Concretely, the output of the gating network is given by
\begin{equation}
G(x,y) = \mathrm{Softmax}(\mathrm{KeepTop1}(H(x,y))),
\end{equation}
where
\begin{equation}
\begin{gathered}
H(x,y)_i = ([x,y]\cdot W_g)_i + \delta \cdot \log\left( 1 + \exp\left( [x,y]\cdot W_\mathrm{noise} \right)_i \right),
 \\
\mathrm{KeepTop1}(v)_i = \left\{\begin{aligned} v_i,\quad\ &\text{if}\ v_i\ \text{is the largest element of}\ v.\\ -\infty,\quad &\text{otherwise}.\end{aligned} \right. ,
\end{gathered}
\end{equation}
in which $\delta\sim \mathcal{N}(0,1)$ is the noise. As shown in the above, the gating network involves two trainable weight matrices, $W_g$ and $W_\mathrm{noise}$, which is trained end-to-end along with the training of all experts. This differs from LEAF that does not involve additional trainable parameters in the allocation function. During testing, we show the outputs of all experts.

\textbf{LEAF-EM.} We implement this variant in the same way as LEAF, expect that the allocation function is stochastic with the sampling strategy as in Sec.~\ref{appsec:regression_em}. This sampling process involve an additional hyperparameter $\eta$ corresponding to the $2\sigma^2$ parameter in Equation~\beqref{appeq:regression_em_sample}; we set $\eta = 0.1$ in our experiments.

\subsection{Classification}
\label{appsubsec:cls}

As stated in the main text, we consider three classification settings that involve paralleled, hierarchical, and opposite input-output relationship across the examples. In the sequel, we provide the experimental details on these settings separately.

\paragraph{Paralleled relationship.} For this setting, we use two datasets, namely \textit{Colored MNIST} and \textit{Fashion Product Images}.

\begin{figure}
  \centering
  \includegraphics[width=0.5\linewidth]{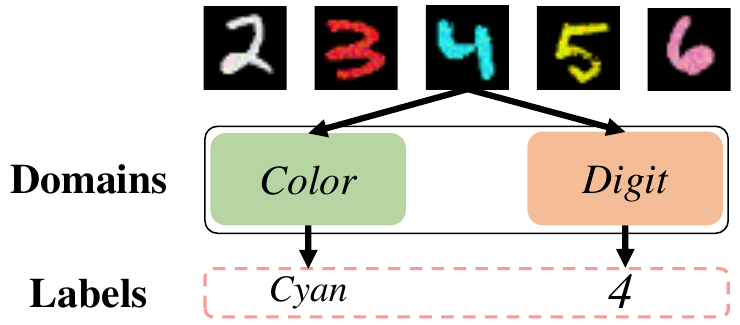}
  \caption{Paralleled relations: \textit{Colored MNIST}.}
  \label{appfig:cmnist}
\end{figure}
\textit{Colored MNIST} is an extended version of the classical digit recognition dataset MNIST. For each gray-scale digit image in the MNIST training set, we randomly color it using a pre-defined color pool consisting of eight colors: \{\textit{red}, \textit{blue}, \textit{yellow}, \textit{green}, \textit{pink}, \textit{cyan}, \textit{white}, \textit{purple}\}, and assign each colored digit with an additional color label. This results in two domains in the colored dataset with 60000 training images: \textit{Digit} and \textit{Color}, as shown in Figure~\ref{appfig:cmnist}. In each training batch, we randomly select one domain and inherit the ground-truth labels in the selected domain for all examples.
We set the sampling parameters of LEAF to $m=600000$, $n=1$, corresponding to 10 training epochs. We use a model set with two neural network classifiers with the same architecture, consisting of three convolutional layers and one fully-connected layer with ReLU nonlinearities between the layers. We use the Adam optimizer~\citep{kingma_adam:_2015} with $learning\ rate=0.002$ and $betas=(0.5,0.999)$ without weight decay for training.

\textit{Fashion Product Images}~\citep{fashion_product_dataset}
is a dataset for automatic attribute completion and question answering with multiple category labels. We choose three main parallel tasks with eight main labels from the original dataset, resulting in three domains namely \textit{Color}, \textit{Gender}, and \textit{Category} containing labels \{\textit{black}, \textit{white}, \textit{blue}\}, \{\textit{male}, \textit{female}\}, and \{\textit{apparel}, \textit{footwear}, \textit{accessories}\}, respectively. There are totally 15000 training images. In each training batch, we randomly select one domain and inherit the ground-truth labels in the selected domain for all examples. We set the sampling parameters of LEAF to $m=750000, n=1$, corresponding to 50 training epochs. We use a model set consisting of three neural network classifiers with the same architecture. Each model has five convolutional layers and one fully-connected layer with ReLU nonlinearities between the layers. We use the Adam optimizer with $learning\ rate=0.002$ and $betas=(0.5,0.999)$ without weight decay for training.

\paragraph{Hierarchical relationship.} For this setting, we use
the \textit{CIFAR-100} dataset~\citep{krizhevsky2009learning}, which is a widely-used benchmark for image recognition. It has a hierarchical structure with 20 superclasses and 100 classes, with 60000 images in total. Each superclass further contains 5 ``fine'' classes; for instance, the superclass \textit{insect} contains \textit{bee}, \textit{beetle}, \textit{butterfly}, \textit{caterpillar}, and \textit{cockroach}. In each training batch, we randomly select one domain and inherit the ground-truth labels in the selected domain for all examples. We set the sampling parameters of LEAF to $m=1200000, n=2$, corresponding to 40 training epochs. We use a model set with two neural network classifiers with the same architecture. For each model, we use a pre-trained DenseNet~\citep{huang2017densely} backbone
for feature extraction for LEAF and all baselines, and add two fully-connected layers after the backbone with ReLU nonlinearities. We use a standard SGD optimizer with $learning\ rate=0.1$ and $momentum=0.9$ with no weight decay to train the models.

\paragraph{Opposite relationship.} For this setting, we use a relabeled \textit{Fashion Product Images} dataset to simulate the hidden context that depends on human preferences. Concretely, we re-label the \textit{Fashion Product Images} dataset and construct a preference-dependent scenario with two domains: \textit{Male} and \textit{Female}.
This simulates the scenario where the preference is based on the gender attribute. The re-labeling process is as follows: we randomly split the dataset into two domains according to the ``gender'' attribute: 50\% of examples labeled as ``male'' and 50\% examples labeled as ``female'' are assigned to the first domain with ``male'' re-labeled as ``positive'' and ``female'' re-labeled as ``negative'', and other examples are assigned to the second domain with ``male'' re-labeled as ``negative'' and ``female'' re-labeled as ``positive''. In each training batch, we randomly select one domain and use the ``positive'' or ``negative'' attribute of the examples as the ground-truth labels for classification.
We set the sampling parameters of LEAF to $m=37500, n=20$, corresponding to 50 training epochs. We use a model set with two neural network classifiers with the same architecture. Each model consists of five convolutional layers and one fully-connected layer with ReLU nonlinearities as in the \textit{Fashion Product Images} in the paralleled relation setting. We use the Adam optimizer with $learning\ rate=0.002$ and $betas=(0.5,0.999)$ without weight decay for training. 


\subsubsection{Baseline details}
\label{appsubsec:classification_baselines}

\textbf{ERM-top $k$.} This baseline directly models the relation between inputs and outputs using a probability distribution $p(\rvy\,|\,\rvx)$, which is the learning target. For classification problems, when different domains share similar frequencies during sampling, such a distribution can be approximated with the total probability formula
\begin{equation}
    p(\rvy\,|\,\rvx)=\sum_{d\in D} p(\rvy\,|\,\rvx,d)\cdot Q(d) \approx \frac{1}{|D|}\sum_{d\in D} p(\rvy\,|\,\rvx,d).
\end{equation}
For conflicting data, the resulting conditional distribution is multi-modal, and a network trained by the cross-entropy loss can provide an unbiased estimation of it. However, this method may suffer if the occurence frequencies of different domains differ significantly. Also, it introduces a hyperparameter $k$ used when selecting the top-$k$ outputs; in practice we directly use the ground-truth domain number for $k$, which leaks some additional information for this baseline.

\textbf{Semi-supervised multi-label learning.} We alternatively interpret learning from conflicting classification data as multi-label learning with missing labels:
for a given input $x\in\mathcal{X}$, consider the ``fully'' labeled data $(x, \bm{y}=\{y_i\}_{i=1}^N)$; conflicting data can then be modeled as providing only \emph{one} label $y\in \bm{y}$ for each $x$, with \emph{all} other labels missing (note that this is a very extreme setting). Therefore, existing semi-supervised learning approaches may be modified to handle such problems. We consider two representative techniques, including \emph{PseudoLabel}~\citep{lee2013pseudo} and \emph{Label propagation}~\citep{iscen2019label}. Both implementations are slightly modified to fit our tasks.

\emph{PseudoLabel} randomly allocates additional ``pseudo'' labels to each input to compensate the missing labels. The learning machine is trained on the augmented dataset and then reevaluate the confidence of all pseudo labels according to its predictions. All pseudo labels will iteratively be modified during training until convergence.

\emph{Label propgation (LabelProp)} builds a graph over the examples, where each node on the graph represents a data sample, and each edge represents the distance of two nodes it collects in the feature space. The labels are propagated on adjacent nodes until all examples are fully labeled. The feature space is also iteratively adjusted during training.

\textbf{Sparse MoE.} This baseline follows the implementation in the regression task as detailed in Sec.~\ref{appsec:regression_baselines}. Each expert is instantiated as a neural network classifier with the same architecture as the models in LEAF. We resize the one-hot labels in the examples to the same height and width as input images for the gating network.

\textbf{LEAF-EM.} We implement this variant in the same way as LEAF, expect that the allocation function is stochastic with the sampling strategy as in Sec.~\ref{appsec:classification_em}.

\textbf{MLL oracle.} We provide the label annotations from all domains simultaneously for the learner, resulting in a ``multi-hot'' label vector for each example. This transforms the original setting to a standard multi-label learning problem. We train a single network with the same architecture as LEAF using the multi-hot label vectors as supervision by using a binary cross-entropy loss for each dimension of the multi-hot label vector.

\textbf{MTL oracle.} We provide the ground-truth domain index for each example. The raw data is then partitioned into subsets for several classification tasks without conflict, resulting in a standard multi-task learning problem. We train separate networks with the same architecture as LEAF models on these subsets. We also tried adding a shared representation layer among all networks, but found it slightly decreasing the performance, potentially due to different domains generally using distinct semantic features for their classification tasks.



\section{Additional experiments and results}
\label{appsec:additional_experiments}

In this section, we provide additional experimental results and visualization.

\subsection{Empirical robustness study under cross- and in-domain noise}
\label{appsec:robustness}

In this section, we empirically evaluate the robustness of LEAF against both the cross- and the in-domain noise.

\paragraph{Cross-domain noise.} We consider the same setting as in our theoretical analysis in Sec.~\ref{appsubsec:theo_gen}, where we introduce a \emph{mixing ratio} parameter $\alpha\in[0, 1)$ that controls the expected percentage of noisy examples: for each data pair in the $i$-th ($i\in[n]$) batch, with probability $1-\alpha$ it is drawn from the sampled domain $\rvd^i = \langle P^i,c^i \rangle $, and with probability $\alpha$ it is drawn from other domains in the domain set, i.e., $D\setminus \{\rvd^i\}$. As the mixing ratio increases, the data becomes more and more ``noisy'' with examples from other domains. We evaluate the robustness of LEAF as well as the LEAF-EM variant on the paralleled relation setting on the \textit{Fashion Product Images} dataset with different levels of noise: $\alpha = 0,\,1\%,\,2\%,\,5\%,\,10\%,\,20\%$, where $\alpha = 0$ corresponds to the standard setting without cross-domain noise. As shown in Table~\ref{table:robust_cross}, LEAF is robust against the cross-domain noise: as the mixing ratio increases from $0$ to $20\%$, the test prediction errors of LEAF only increases marginally. Also, LEAF is more robust to cross-domain noise than the LEAF-EM variant with generally smaller standard deviations of the prediction errors.

\paragraph{In-domain noise.} We consider the setting where for each domain, every example may be randomly assigned a false label, but this false label still belongs to the label set of the domain. For instance, an image labeled as ``red'' in the ``color'' domain may be mis-labeled as another color, e.g.,  ``yellow''. We introduce a noise ratio parameter $\beta\in [0, 1)$ that controls the probability of this label noise. Like the cross-domain noise setting, we evaluate the robustness of LEAF on the paralleled relation setting on the \textit{Fashion Product Images} dataset with different levels of noise: $\beta = 0,\,1\%,\,2\%,\,5\%$, where $\beta = 0$ corresponds to the standard setting without in-domain noise. As shown in Table~\ref{table:robust_in}, LEAF generally exhibits some performance degradation under the in-domain label noise, a similar phenomenon to the standard single-domain setting with supervised learning models; note that we do not use any additional strategy to improve the robustness of our model against the label noise, and a better result can be expected if we further adopt robust learning techniques in the literature on top of LEAF.

\begin{table}[t]
  \centering
  \caption{Results of LEAF under different levels of cross-domain noise. We report the mean prediction errors and standard deviations on \textit{Gender}, \textit{Category}, and \textit{Color} domains of \textit{Fashion Product Images} over three independent trials.}
  \begin{tabular}{lcccccc}
  \toprule
   \multirow{3}{*}{Mixing ratio}  & \multicolumn{3}{c}{LEAF-EM} & \multicolumn{3}{c}{LEAF} \\
   \cmidrule(r){2-4}  \cmidrule(r){5-7}
   & \multicolumn{3}{c}{{Error (\%)}} & \multicolumn{3}{c}{Error (\%)} \\
   &\textit{Gender} & \textit{Category} & \textit{Color} &\textit{Gender} & \textit{Category} & \textit{Color} \\
    \midrule
    $\alpha=0$ (clean) & $ 26.67 _{\pm 3.55}$ & $6.01 _{\pm 0.82}$ & $18.70_{\pm 0.92}$ & $7.87 _{\pm 0.21}$ & $1.93_{\pm 0.35}$ & $12.85_{\pm 0.67}$ \\
    \midrule
    $\alpha=1\%$ & $ 27.13 _{\pm 3.60}$ & $6.15_{\pm 0.94}$ & $19.20_{\pm 1.70}$ & $7.91 _{\pm 0.40}$ & $1.95_{\pm 0.42}$ & $12.97_{\pm 0.88}$ \\
    $\alpha=2\%$ & $ 27.42 _{\pm 4.10}$ & $6.23_{\pm 1.42}$ & $20.10_{\pm 3.20}$ & $7.94 _{\pm 0.43}$ & $2.08_{\pm 0.49}$ & $13.22_{\pm 1.20}$ \\
    $\alpha=5\%$ & $ 28.70 _{\pm 4.41}$ & $7.54_{\pm 1.35}$ & $25.79_{\pm 3.70}$ & $8.14 _{\pm 1.12}$ & $2.18_{\pm 1.47}$ & $13.44_{\pm 1.40}$ \\
    $\alpha=10\%$ & $ 29.30 _{\pm 3.80}$ & $6.64_{\pm 1.50}$ & $24.60_{\pm 3.60}$ & $8.17 _{\pm 1.78}$ & $2.37_{\pm 2.41}$ & $13.35_{\pm 2.30}$ \\
    $\alpha=20\%$ & $ 29.12 _{\pm 6.60}$ & $8.77_{\pm 2.70}$ & $35.73_{\pm 7.40}$ & $8.30 _{\pm 2.10}$ & $2.22_{\pm 2.37}$ & $13.33_{\pm 3.05}$ \\
    \bottomrule
  \end{tabular}
\label{table:robust_cross}
\end{table}

\begin{table}[t]
  \centering
  \caption{Results of LEAF under different levels of in-domain noise. We report the mean prediction errors and standard deviations on \textit{Gender}, \textit{Category}, and \textit{Color} domains of \textit{Fashion Product Images} over three independent trials.}
  \begin{tabular}{lcccccc}
  \toprule
   \multirow{2}{*}{Noise ratio} & \multicolumn{3}{c}{{Error (\%)}} \\
   &\textit{Gender} & \textit{Category} & \textit{Color} \\
    \midrule
    $\beta=0$ (clean) & $7.87 _{\pm 0.21}$ & $1.93_{\pm 0.35}$ & $12.85_{\pm 0.67}$ \\
    \midrule
    $\beta=1\%$ & $7.74 _{\pm 0.35}$ & $2.01_{\pm 0.45}$ & $12.98_{\pm 0.97}$ \\
    $\beta=2\%$ & $12.28 _{\pm 2.01}$ & $10.56_{\pm 3.22}$ & $14.00_{\pm 1.04}$ \\
    $\beta=5\%$ & $22.36 _{\pm 3.87}$ & $19.72_{\pm 4.46}$ & $27.70_{\pm 3.75}$ \\
    \bottomrule
  \end{tabular}
\label{table:robust_in}
\end{table}

\subsection{Multi-dimensional regression on real-world data}\label{appsubsec:exp_extra_reg}

To further demonstrate the efficacy of LEAF in regression problems, we conducted an experiment on a real-world multi-dimensional regression dataset from UCI machine learning repository: Gas sensor array under dynamic gas mixtures dataset~\citep{fonollosa2015reservoir}. This dataset contains the recordings of 16 chemical sensors exposed to two different dynamic gas mixtures and the aim is to predict the concentrations of gases, with 417,8504 instances and 16-dimensional attributes. We treat each gas mixture as one domain, representing Ethylene \& Methane (domain \textit{Methane}) and Ethylene \& CO (domain \textit{CO}) gas mixtures, and randomly split both domains into training (90\%) and test sets (10\%). Since this is a large dataset, we consider more examples in each batch and set the sampling paramters of LEAF to $n=50$. We use a model set consisting of two neural network regressors for LEAF, and use the Adam optimizer for training.

In this task, we compare LEAF with a vanilla single model regressor (trained by ERM on the union of both domains) and an oracle regressor with two models separately trained by ERM on each domain. We report root mean square error (RMSE) on each domain and the macro-average RMSE over both domains in Table~\ref{table:error_rate_extra_reg}. The results indicate that LEAF benefits from its automatic data allocation process, surpassing the ERM baseline that only trains a single global model by a large margin and performing competitively with the oracle.


\begin{table}[t]
  \centering
  \caption{{Results of LEAF on multi-dimensional regression task on gas sensor array under dynamic gas mixtures dataset. We report RMSE on each domain and the macro-average RMSE over both domains.}}
  \begin{tabular}{lccc}
    \toprule
    {Methods} & {RMSE (domain \textit{Methane})} & {RMSE (domain \textit{CO})} & {RMSE (macro-average)} \\
    \midrule
    {ERM} & {76.5} & {89.7} & {83.1} \\
    \textbf{LEAF} & {\textbf{34.0}} & {\textbf{72.6}} & {\textbf{53.3}} \\
    \midrule
    {Oracle} & {31.9} & {66.8} & {49.4} \\
    \bottomrule
  \end{tabular}
\label{table:error_rate_extra_reg}
\end{table}

\subsection{Ablation study on the impact of the number of low-level models and sampling parameters}
\label{appsec:sample_parameter}

In this section, we empirically study the impact of the number of low-level models and sampling parameters on the performance of LEAF, and discuss some heuristics for determining the number of low-level models in practice.

We begin by evaluating the performance of LEAF with different number of low-level models ($K=2,\,3,\,4$ with $K=3$ as the default) and sampling parameters ($m=50,n=2$; $m=250,n=1$ and $m=250,n=2$ with $m=250,n=2$ as the default)  on the regression task considered in our experiment (Figure~\ref{fig:func_gt}). As shown in Figure~\ref{fig:parameters}, LEAF with $K\ge R(D)$ ($K=3$ or $4$) successfully distinguishes different functions and recover each of them accurately, while LEAF with $K < R(D)$ ($K=2$) fails, which matches the result of Theorem~\ref{theo:pac}. In particular, LEAF with $K=4$ automatically leaves one network to be redundant (dashed curve in Figure~\ref{subfig:over}), demonstrating the robustness on the number of low-level models of our framework.
Meanwhile, LEAF with fewer batches $m=50$ (Figure~\ref{appsubfig:fewer_batches}) roughly recovers all three functions but does not yield accurate predictions in fine-grained details; this corresponds to the instance-wise estimation error term~\beqref{eq:t2} in the generalization error due to that the product $mn$ being not sufficiently large. On the other hand, LEAF with batch size $n=1$ (Figure~\ref{appsubfig:fewer_examples}) produces smooth predictions in the details, yet partially swaps some parts of different functions; this corresponds to the allocation estimation term~\beqref{eq:t3} in the generalization error due to that $n$ being not sufficiently large for identifying the domains.

We then empirically study the effect of additional low-level models on LEAF in the classification tasks. Concretely, for each dataset in the paralleled relation and the hierarchical relation settings, we employ a model set with the model number \emph{larger} than the number of domains by one. We empirically observe that with an additional model, LEAF can still learn accurate predictions for each domain using a subset of the model set, with results shown in Table~\ref{table:error_rate_am}. The additional model may be simply ``abandoned'' as in the regression task, or becomes nearly identical to one of the other models. This naturally raises the question of {\textbf{how to determine the number of low-level models in LEAF when we have no prior knowledge on the number of conflicing domains in the training data}. Here we discuss some heuristics for this problem. In summary, there are two lines of methods that can be useful:
\begin{itemize}
\item As shown by Theorem~\ref{theo:pac}, a necessary condition of the PAC-learnability of LEAF is $K\ge R(D)$; in other words, the number of low-level models must be sufficiently large to possibly achieve a zero training error even in the realizable case. Therefore, we can progressively increase the number of low-level models and choose the \emph{minimal} number that elicits a low training error. This is analogous to using Akaike Information Criterion (AIC) or Bayesian Information Criterion (BIC) metrics~\citep{claeskens2008model} that both can be viewed as balancing the training error and the number of total parameters (which is directly proportional to the number of low-level models) of the whole model.
\item As shown by Theorem~\ref{theo:uniq}, if the number of models is strictly larger than the conflict rank of the domain set, then there must exists a subset of the model set that separately captures the target functions in all domains. Therefore, we can filter out the redundant models in the model set by tracking the calling frequency of each model during training and removing the ones that are never or seldom called, or by checking the redudancy between the outputs of the models (after the models are sufficiently trained) and removing the models with identical outputs to other models for a sufficient number of examples.
\end{itemize}

\begin{figure}[t]
  \centering
  \subcaptionbox{ Ground truth \label{appsubfig:gt}}{\includegraphics[width=0.3\linewidth]{figures/regression_gt}}
  \hspace{0.3em}
  \subcaptionbox{ LEAF with fewer batches $(m=50)$ \label{appsubfig:fewer_batches}}{\includegraphics[width=0.3\linewidth]{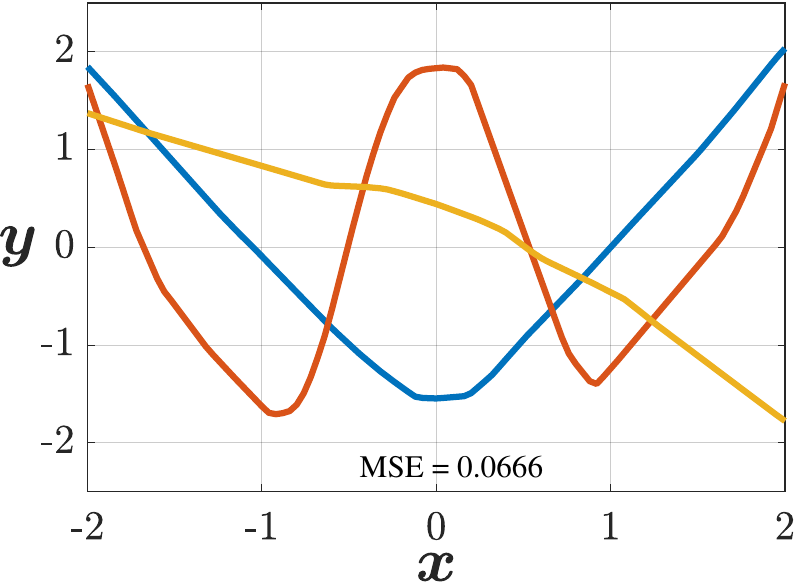}}
  \hspace{0.3em}
  \subcaptionbox{ LEAF with smaller batch size $(n=1)$ \label{appsubfig:fewer_examples}}{\includegraphics[width=0.3\linewidth]{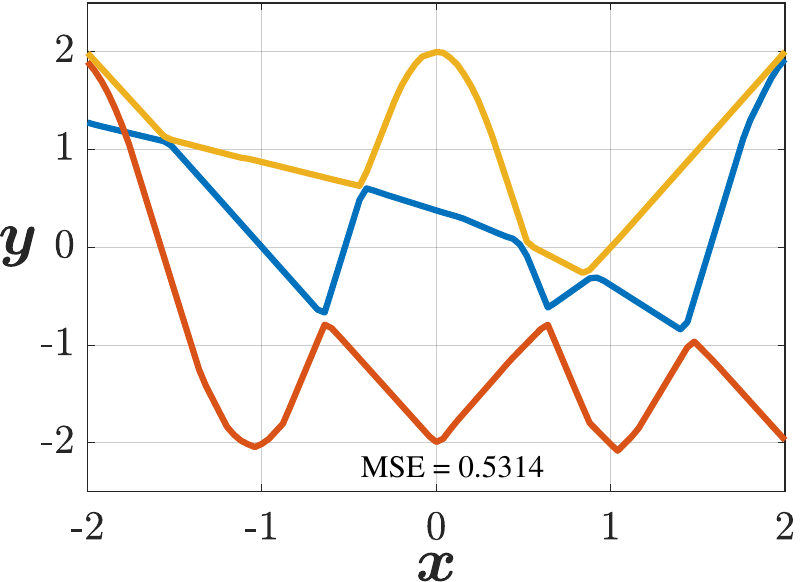}}\\
  \subcaptionbox{LEAF with two low-level models $(K=2)$\label{subfig:less}}{\includegraphics[width=0.3\linewidth]{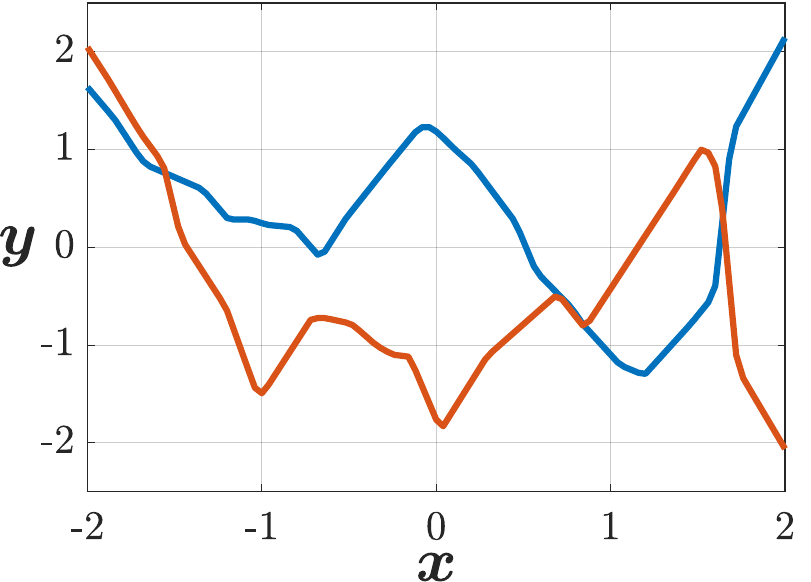}}
  \hspace{0.3em}  
  \subcaptionbox{LEAF with three low-level models $(K=3)$\label{subfig:ours}}{\includegraphics[width=0.3\linewidth]{figures/Pic2_chu/main_ours.pdf}}
  \hspace{0.3em}  
  \subcaptionbox{LEAF with four low-level models $(K=4)$\label{subfig:over}}{\includegraphics[width=0.3\linewidth]{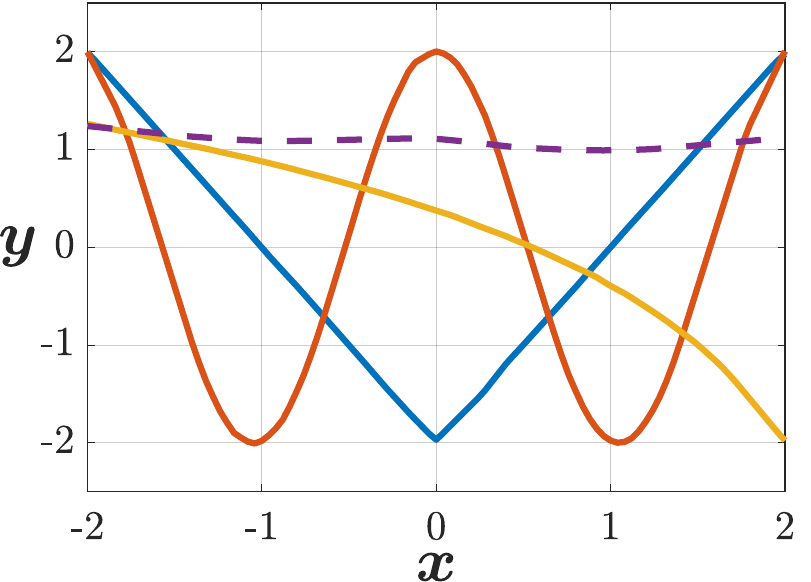}}
  \caption{The impact of sampling parameters and the number of low-level models on LEAF in the regression task.
  }
  \label{fig:parameters}
\end{figure}

\begin{table}[t]
\centering
\caption{Results of LEAF with an additional model (referred to as ``LEAF-AM'' in the table) on the classification tasks with paralleled and hierarchical input-output relationship.
  We report prediction errors on \textit{Digit} and \textit{Color} domains of \textit{Colored MNIST}, \textit{Gender}, \textit{Category} and \textit{Color} domains of \textit{Fashion Product Images}, and \textit{Superclass} and \textit{Class} domains of \textit{CIFAR-100}.}
\begin{tabular}{lrrrrrrrr}
    \toprule
    \multirow{3}{*}{Methods} & \multicolumn{2}{c}{\textit{Colored MNIST}} & \multicolumn{3}{c}{\textit{Fashion Product Images}}                                                                                                                                           & \multicolumn{2}{c}{\textit{CIFAR-100}}                                                                                           \\ \cmidrule(r){2-3}  \cmidrule(r){4-6} \cmidrule(r){7-8}
    & \multicolumn{2}{c}{Error (\%)} & \multicolumn{3}{c}{Error (\%)} & \multicolumn{2}{c}{Error (\%)} \\
    & \multicolumn{1}{c}{\textit{Digit}} & \multicolumn{1}{c}{\textit{Color}} & \multicolumn{1}{c}{\textit{Gender}} & \multicolumn{1}{c}{\textit{Category}} & \multicolumn{1}{c}{\textit{Color}} & \multicolumn{1}{c}{\textit{Superclass}} & \multicolumn{1}{c}{ \textit{Class}} \\ \midrule
    LEAF & 1.70 & 0.03 & 7.87 & 1.93 & 12.85 & 21.40 & 25.05 \\
    LEAF-AM & 2.25 & 0.17 & 7.91 & 1.82 & 13.15 & 23.52 & 24.81 \\
    \bottomrule
  \end{tabular}
\label{table:error_rate_am}
\end{table}

{
\subsection{Wall-clock training and inference time on the Fashion Product Images dataset}
\label{appsec:time}

In Table~\ref{table:time}, we compare the computational cost of LEAF and other baselines in terms of training and inference time on the \textit{Fashion Product Images} dataset. We measure the required wall-clock time (in seconds) for each method to reach convergence during training as well as the averaged wall-clock time for each method to predict all labels of one given input (in milliseconds). Concretely, for LEAF and all baselines except for two oracles (MLL oracle and MTL oracle), we train the models for 50 epochs with 15,000 images in every epoch; for MLL oracle and MTL oracle, we train for 10 epochs since these methods generally converge faster. For each method, we test its total inference time on the same 3,000 test examples randomly sampled from the test set and report the mean inference time on each test example. All results are obtained with PyTorch using a NVIDIA 2080ti GPU.

As shown in the table, the time cost of LEAF is generally on par with or lower than baselines that also involve iterative training (PseudoLabel and LabelProp). Although the ERM-top$k$ baseline trains faster, it learns a global model without the mechanism of data allocation and thus performs considerably worse than LEAF, as we have shown in earlier sections. Meanwhile, compared with the MTL oracle that knows the example-domain correspondences in advance, LEAF only exhibits a little additional computational overhead (note that the training time of LEAF is measured over 50 epochs, while that of MTL oracle is measured over only 10 epochs). This indicates that although LEAF incorporates an extra data allocation process implemented by the allocation function, this process only induces a limited computational cost since it only requires additional network forward processes without loss backpropagation, which is in general computational efficient.
}

\begin{table}[t]
  \centering
  \caption{{ Wall-clock training and inference time of LEAF and baselines on the \textit{Fashion Product Images} dataset. The training time of ERM-top $k$, PseoduLabel, LabelProp, Sparse MoE, LEAF-EM and LEAF is measured over 50 epochs, and the training time of MLL oracle and MTL oracle is measured over 10 epochs. The inference time of all methods is measured using an average over 3,000 test examples.}}
  \begin{tabular}{lcc}
    \toprule
    {Methods} & {Training time (in seconds)} & {Inference time (in milliseconds)} \\
    \midrule
    {ERM-top $k$} & {415} & {0.67} \\
    {PseudoLabel} & {833} & {2.00} \\
    {LabelProp} & {915} & {0.59} \\
    Sparse MoE & 886 & 0.84 \\
    \midrule
    LEAF-EM & 672 & 0.81 \\
    \textbf{LEAF} & {580} & {0.79} \\
    \midrule
    {MLL oracle} & {159} & {0.68} \\
    {MTL oracle} & {93} & {0.76} \\
    \bottomrule
  \end{tabular}
\label{table:time}
\end{table}

\subsection{Additional visualization}
\label{appsec:additional_visualization}

In this section, we provide additional visualization on the iteration process and the low-level feature spaces learned by LEAF.

\subsubsection{Iteration process on the regression task}
In LEAF, since the performance of the low-level models will impact the data allocation process of the high-level allocation function, the training of the allocation function and low-level networks is highly relevant. As shown by Figure~\ref{fig:iter}, we provide an illustrative example in regression, in which the different colored lines refers to the allocations to different models. All networks are randomly initialized, and in each iteration, each sample may be reallocated by the allocation function and used to further train the low-level networks. With the increase of iterations, both the high-level data allocation strategy and the low-level predictions will converge with the minimization of the global training error. The last subfigure displays the final decision boundary of allocation function and indicates that all data points have been correctly allocated.

\subsubsection{Feature visualization on the classification task}

LEAF can extract different semantics from the same input and map them to different feature spaces by different low-level models. Figure~\ref{fig:visual2} displays the features output by all low-level models learned by LEAF, where each color represents a low-level model and each point in the figure corresponds to an input image in the \textit{Fashion Product Images} dataset. As shown in the figure, different low-level models learn metrics based on different semantics of the inputs.

\begin{figure}[t]
  \centering
  \includegraphics[width = 0.99\textwidth]{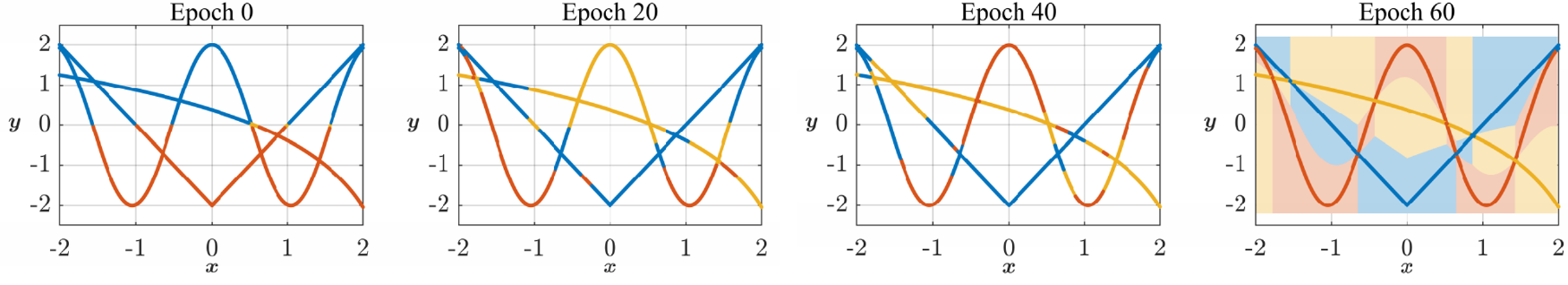}
  \caption{An example of the iteration process and the final decision boundaries of the allocation function in the regression task. }
  \label{fig:iter}
\end{figure}

\begin{figure}[t]
  \centering
  \includegraphics[width = 0.8\textwidth]{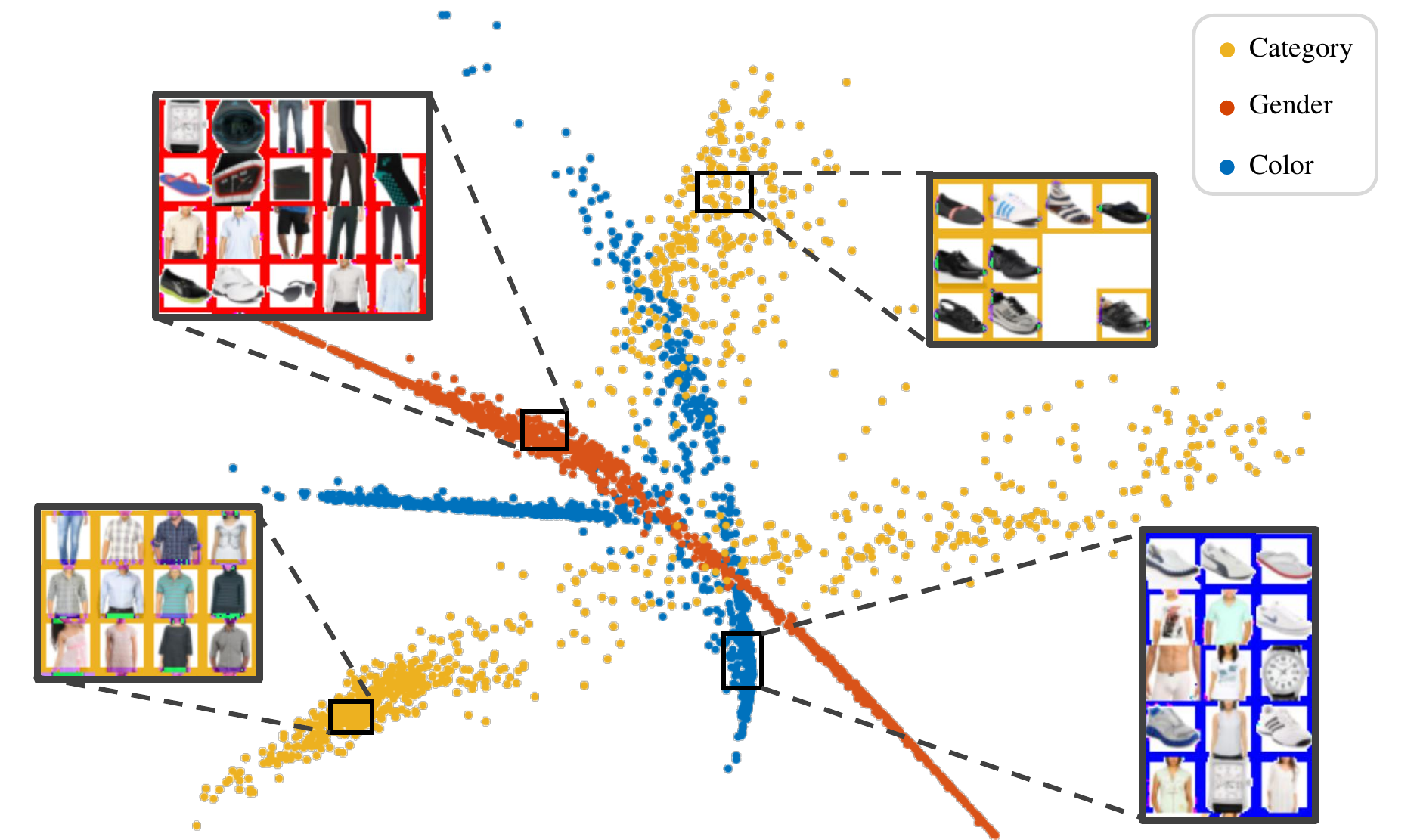}
  \caption{Additional visualization of different low-level feature spaces of LEAF on the \textit{Fashion Product Images} dataset. }
  \label{fig:visual2}
\end{figure}

{
\section{Additional related work}
\label{appsec:related_work}

In this section, we provide more discussion of related work.
\paragraph{Ensemble learning.} Ensemble learning approaches typically employ multiple models to cooperatively solve a given task~\citep{dietterich_ensemble_2002,zhang_ensemble_2012,sagi_ensemble_2018,zhou_ensemble_2021}. The prediction of each model is combined by weighting (boosting), majority voting (bagging) or learning a second-level meta-learner (stacking). Since different models process the same set of data (although sometimes with different instance weights), there is typically no explicit ``hard'' allocation process in ensemble learning between the examples and the models. In contrast, the multi-model architecture of LEAF is driven by the inherent conflict in the conflicting training data, and each model only handles a proportion of the whole dataset without overlapping. Extensive literature has explored the collaboration of multiple models or modules in completing one or multiple tasks~\citep{doya_multiple_2002,andreas_neural_2016,alet_modular_2018,meyerson_modular_2019,yang_multi-task_2020,yuksel_twenty_2012,shazeer_outrageously_2017}.
The difference between these methods and LEAF is that
 the multi-model architecture of LEAF is driven by the inherent conflict in conflicting data,
and we only allow a single low-level model to be invoked for a particular batch. 

\paragraph{Domain adaptation and domain generalization.} Domain adaptation~\citep{ben-david_theory_2010,pan_survey_2010,tan_survey_2018,wang_deep_2018,hoffman_algorithms_2018} and domain generalization~\citep{blanchard_generalizing_2011,muandet_domain_2013,zhou_domain_2021} consider the scenario where the learner trained on one or multiple source domain(s) is transferred to one or multiple new target domain(s). Typically, domain adaptation focuses on the problem where there exist some labeled or unlabeled instances in the new domain, while domain generalization considers the setting where there the information of the new domains is inaccessible during training (i.e., zero-shot generalization). In other words, these formulations focus on the \emph{adaptation} or \emph{generalization} capability of the model on the target domain(s) and do not consider training on the sources domains \emph{itself} to be an issue. By contrast, LEAF focuses on the multi-domain \emph{training} process and considers the scenario where directly training a global model by ERM using the data from multiple conflicting domains is problematic, and aims to resolve this training issue by performing automatic data allocation. {In other words, the data setup between our work and unsupervised domain adaptation is fundamentally different: a common assumption adopted by most of unsupervised domain adaptation is covariate shift assumption~\citep{ben-david_impossibility_2010}, i.e., the input-conditioned label distribution $p(\rvy\,|\,\rvx)$ is the same for all domains while the input marginal distribution $p(\rvx)$ varies. By contrast, we define conflicting domains as the target function from different domains give distinct outputs for the same set of inputs, which stands for varying $p(\rvy\,|\,\rvx)$. Therefore, the setup of unsupervised domain adaptation and our work are orthogonal. We also note that from a theoretical view, existing domain adaptation methods are provably infeasible in our problem since their success requires the existence of a model with low error simultanesouly on all domains (see e.g.,~\citet{mansour_domain_2009,ben-david_impossibility_2010}), which is not true if  varies a lot across different domains.} {Finally,} the example-domain correspondences are usually available in domain adaptation and domain generalization settings, while LEAF weakens this assumption by only assuming that the examples in each batch are obtained from the same domain.
}

{\paragraph{Latent variable modeling.}
Latent variable modeling~\citep{skrondal_generalized_2004,muthen_latent_1989,muthen_beyond_2002} is a general approach that models the relation between observed variables with unobserved, latent variables. In a broad sense, LEAF falls into the category of latent variable modeling by treating the domain of each data pair as a (discrete) latent variable. Nevertheless, the notion of latent variable modeling itself is rather general, and distinct methods may be developed based on the specific problem setup. To our knowledge, the specific problem of learning from conflicting data has not been addressed by prior latent variable modeling methods. Meanwhile, several key design choices (using Viterbi EM rather than standard EM) and data assumptions ($n > 1$) of LEAF are tailored to our problem setup, differing LEAF from general latent variable modeling approaches. See the next section for more detailed discussion.
}

\section{Necessity of using batches of data rather than single data pairs for conflicting domain identification}
\label{appsec:necessity_batch}

{
In this section, we discuss the \emph{necessity} of the assumption that we observe a size-$n$ batch rather than single data pairs ($n=1$), which is often considered by general latent variable modeling methods without further structural assumptions on data. To summarize, \textbf{$n=1$ poses fundamental difficulty in identifying conflicting domains and, in the worst case, $n=1$ makes identification of conflicting domains impossible.} We will first construct two examples, one for regression and one for classification, to illustrate this impossibility:
\begin{itemize}
	\item Consider a regression problem with two domains, namely $d_1$ and $d_2$, where training $(x,y)$ pairs are uniformly sampled from $y_1=x,x\in[0,1]$ in $d_1$ and $y_2=1-x,x\in[0,1]$ in $d_2$, respectively. Then, in the $n=1$ setting, both of the following two solutions achieve zero training error:
	\[
	\textbf{S1:} \quad h_1(x)=x,\ x\in[0,1]\quad \text{and}\quad h_2(x)=1-x,\ x\in[0,1];\qquad\qquad\quad
	\]
	\[
	\textbf{S2:}\quad h_1(x)=\left\{\begin{aligned} & x,\ x\in[0,0.5) \\ &1-x,\  x\in[0.5,1]\end{aligned}\right.\quad \text{and}\quad h_2(x)=\left\{\begin{aligned} &1-x,\ x\in [0,0.5)\\ &x,\ x\in[0.5,1] \end{aligned}\right. .
	\]
	However, only \textbf{S1} is the desired solution that separately recovers $y_1$ by $h_1(x)$ and recovers $y_2$ by $h_2(x)$, while \textbf{S2} does not recover either $y_1$ or $y_2$ by any of the model. However, \textbf{S1} and \textbf{S2} are not identifiable when $n=1$ since both of them are optimal during training. We can also see that when $n>1$ would effectively present the undesired solution \textbf{S2}, since we may sample two points $(x^1,y^1)$ and $(x^2,y^2)$ from the same domain with $x^1\in[0,0.5)$ and $x^2\in[0.5, 1]$ in the same batch, so that \textbf{S2} does not achieves zero training error anymore. In fact, we did observe such cases in our regression experiments (see Figure~\ref{appsubfig:fewer_examples} in Appendix~\ref{appsec:sample_parameter}), where $n=1$ leads to incorrect identification of domains.

	\item Consider a classification problem that follows the CIFAR-100 setup in our experiments, where each $x$ corresponds to the input image and $y$ corresponds to the label. There are two domains, $d_1$ and $d_2$, where images in $d_1$ are labeled by superclass names, while images in $d_2$ are labeled by class names. As an illustrative example, we consider $d_1$ with labels ``fish'' and ``flowers'' and $d_2$ with labels ``shark'', ``flatfish'', ``roses'' and ``sunflowers'', with the first two labels belonging to ``fish'' and the last two labels belonging to ``shark''. Both $d_1$ and $d_2$ contains input images that belong to ``shark'', ``flatfish'', ``roses'' and ``sunflowers'' with the same distribution. Then, in the $n=1$ setting, both of the following two solutions achieve zero training error. \textbf{S1:} model $h_1$ classifies all images by their superclass labels, while model $h_2$ classifies all images by their class labels. \textbf{S2:} model $h_1$ classifies fish images according to their superclass labels and classifies flower images by their class labels, while model $h_2$ classifies fish images by their class labels and classifies flower images by their superclass labels. In other words, \textbf{S1} recovers the target function in both domains, while \textbf{S2} fails to recover any target function since the outputs of both models $h_1,h_2$ in \textbf{S2} are \emph{mixtures} of superclass labels and class labels. Similar to the regression case, $n>1$ can prevent \textbf{S2} from being an optimal training solution since superclass labels ``fish'' and ``flowers'' can be sampled in the same batch and thus should be predicted by the same model.
\end{itemize}
Besides the concrete failure cases we provide above, \emph{our theoretical result on the generalization error bound of LEAF also sheds light on the failure of the $n=1$ case:}to bound the generalization error between empirical and expected allocation functions (equation~\beqref{eq:t3} in Theorem~\ref{theo:gen}), we generally require a sufficiently large $n$. Therefore, $n=1$ may create a non-negligible gap between empirical (training) error and expected error. Since our Theorem~\ref{theo:uniq} suggests an equivalence between domain identification and reaching a small \emph{expected error}, having a gap between empirical and expected errors can pose difficulty in domain identification.

Finally, we run extra experiments to illustrate this failure of $n=1$ on CIFAR-100: we added a new baseline denoted by \textit{EM ($n=1$)} that (1) uses standard EM and (2) eliminates the information about which datapoints come from the same batch. We evaluated its performance on the CIFAR-100 dataset (with two low-level models), following the same setup as in our main experiments in the main text. However, this baseline cannot converge to stable solution, and we found its performance have large fluctuations across different runs and generally much worse than both \textit{LEAF} and \textit{LEAF-EM}. This validates the counterexample on the identifiability issue in the $n=1$ case in the above. Detailed results are in Table~\ref{table:exp_add}.
}

\begin{table}[t]
\centering
{
\caption{Comparisons between LEAF, LEAF-EM and EM $(n=1)$ on CIFAR-100.
  We report prediction errors \textit{Superclass} and \textit{Class} domains.}
\begin{tabular}{lrr}
    \toprule
    \multirow{3}{*}{Methods}  & \multicolumn{2}{c}{\textit{CIFAR-100}} \\  \cmidrule(r){2-3}
    & \multicolumn{2}{c}{Error (\%)} \\
    & \multicolumn{1}{c}{\textit{Superclass}} & \multicolumn{1}{c}{ \textit{Class}} \\ \midrule
    LEAF & 21.40 & 25.05 \\
    LEAF-EM & 32.71 & 33.83 \\
    \midrule
    EM ($n=1$), 1st run & 61.71 & 52.46 \\
    EM ($n=1$), 2nd run & 67.43 & 55.20 \\
    EM ($n=1$), 3rd run & 71.33 & 48.79 \\
    \bottomrule
\end{tabular}
\label{table:exp_add}
}
\end{table}


\end{document}